\documentclass[letterpaper]{article} 
\usepackage[]{aaai2026}  
\usepackage{times}  
\usepackage{helvet}  
\usepackage{courier}  
\usepackage[hyphens]{url}  
\usepackage{graphicx} 
\urlstyle{rm} 
\usepackage{natbib}  
\usepackage{caption} 
\frenchspacing  
\setlength{\pdfpagewidth}{8.5in} 
\setlength{\pdfpageheight}{11in} 
%

\usepackage[utf8]{inputenc} 
\usepackage[T1]{fontenc}    
\usepackage{booktabs}       
\usepackage{amsfonts}       
\usepackage{nicefrac}       
\usepackage{microtype}      
\usepackage{xcolor}         

\usepackage{subcaption}
\usepackage{graphicx}
\usepackage{booktabs}
\usepackage{algorithm}
\usepackage{algpseudocode}
\usepackage{latexsym}
\usepackage{amsmath,amssymb}
\usepackage{mathtools}
\usepackage{stmaryrd}
\usepackage{bm}
\usepackage{amsthm}
 
\providecommand{\argmin}{\operatornamewithlimits{argmin}}

\DeclareMathOperator{\diag}{diag}

\providecommand{\R}{\mathbb{R}} 
\providecommand{\E}{\mathbb{E}} 
\providecommand{\T}{\mathrm{T}} 

\renewcommand{\geq}{\geqslant}
\renewcommand{\leq}{\leqslant}
\DeclarePairedDelimiterX{\inner}[2]{\langle}{\rangle}{#1, #2}
\DeclarePairedDelimiter{\norm}{\lVert}{\rVert}
\DeclarePairedDelimiter{\abs}{\lvert}{\rvert}

\usepackage{cleveref}
\newtheorem{theorem}{Theorem}[]
\newtheorem{proposition}[theorem]{Proposition}

\newtheorem{lemma}[theorem]{Lemma}
\theoremstyle{definition}

\newtheorem{remark}[]{Remark}

\newtheorem{assumption}[]{Assumption}
\usepackage{ifthen}

\providecommand{\calD}{\mathcal{D}}
\providecommand{\piref}{\pi_{\mathrm{ref}}}

\providecommand{\rv}{\bm{r}}

%
\usepackage{newfloat}
\usepackage{listings}
\DeclareCaptionStyle{ruled}{labelfont=normalfont,labelsep=colon,strut=off} 
\lstset{%
	basicstyle={\footnotesize\ttfamily},
	numbers=left,numberstyle=\footnotesize,xleftmargin=2em,
	aboveskip=0pt,belowskip=0pt,%
	showstringspaces=false,tabsize=2,breaklines=true}
\floatstyle{ruled}
\newfloat{listing}{tb}{lst}{}
\floatname{listing}{Listing}
%
\pdfinfo{
/TemplateVersion (2026.1)
}

\setcounter{secnumdepth}{0} 

%


\title{Cost-Minimized Label-Flipping Poisoning Attack to LLM Alignment}



\author {
    Shigeki Kusaka\equalcontrib\textsuperscript{\rm 1},
    Keita Saito\equalcontrib\textsuperscript{\rm 1},
    Mikoto Kudo\equalcontrib\textsuperscript{\rm 1,2},
    Takumi Tanabe\textsuperscript{\rm 3},
    Akifumi Wachi\textsuperscript{\rm 3},\\
    Youhei Akimoto\textsuperscript{\rm 1,2,4}
}
\affiliations {
    \textsuperscript{\rm 1}University of Tsukuba,
    \textsuperscript{\rm 2}RIKEN AIP, 
    \textsuperscript{\rm 3}LY Corporation, 
    \textsuperscript{\rm 4}Institute of Science Tokyo\\
    \{shigeki.kusaka, keita.saito, mikoto\}@bbo.cs.tsukuba.ac.jp,\\
    \{akifumi.wachi, takumi.tanabe\}@lycorp.co.jp, \\
    akimoto@cs.tsukuba.ac.jp
}


\begin{document}

\maketitle

\begin{abstract}

Large language models (LLMs) are increasingly deployed in real-world systems, making it critical to understand their vulnerabilities. While data poisoning attacks during RLHF/DPO alignment have been studied empirically, their theoretical foundations remain unclear. We investigate the minimum-cost poisoning attack required to steer an LLM’s policy toward an attacker’s target by flipping preference labels during RLHF/DPO, without altering the compared outputs. We formulate this as a convex optimization problem with linear constraints, deriving lower and upper bounds on the minimum attack cost. As a byproduct of this theoretical analysis, we show that any existing label-flipping attack can be post-processed via our proposed method to reduce the number of label flips required while preserving the intended poisoning effect. Empirical results demonstrate that this cost-minimization post-processing can significantly reduce poisoning costs over baselines, particularly when the reward model’s feature dimension is small relative to the dataset size. These findings highlight fundamental vulnerabilities in RLHF/DPO pipelines and provide tools to evaluate their robustness against low-cost poisoning attacks.
\end{abstract}

\begin{links}
\link{Code}{https://github.com/akimotolab/PoisoningCostMinimization}
\end{links}

\section{Introduction}

\paragraph{Vulnerability of LLM}
As large language models (LLMs) are increasingly deployed in real-world applications,
understanding their vulnerabilities is essential for ensuring their effective and safe use.
Adversarial attacks expose these vulnerabilities, supporting red-teaming efforts \citep{shayegani2023surveyvulnerabilitieslargelanguage}. 
Representative adversarial attacks at inference time include jail-breaking \citep{wei2023neurips,chao2023jailbreaking,Mehrotra2024neurips,zou2023universal} and prompt injection \citep{greshake2023aisec,Liu2024usenix}, where attackers craft malicious inputs to elicit unintended outputs from LLMs. 
At training time, \emph{data poisoning attacks} modify the training dataset to induce undesired behaviors or embed backdoor triggers in the resulting LLM. In this paper, we focus on the data poisoning attack on LLMs. 

\paragraph{Poisoning attacks on preference alignment}

LLMs are susceptible to data poisoning attacks due to their multi-stage training pipeline, which typically includes pre-training, supervised fine-tuning (SFT), and alignment via reinforcement learning from human feedback (RLHF) \citep{rlhf} or direct preference optimization (DPO) \citep{dpo}. 
Recent empirical studies have demonstrated that LLMs can be compromised through poisoning during both the SFT phase \citep{wan2023poisoning,Shu2023neurips} and the RLHF/DPO phase \citep{wu2024preference,rlhfpoison,pathmanathan2024poisoning,baumgartner2024best,pathmanathan2024advbdgen,tramer2024universal}, raising concerns about their robustness to adversarial manipulation. 
However, the theoretical foundations of poisoning attacks in the RLHF/DPO phase remain largely unexplored, leaving open questions about the fundamental vulnerabilities of these methods and potential defenses.
A theoretical understanding is crucial to ascertain the worst-case scenarios for victims, which empirical studies cannot fully reveal.

\paragraph{Objective}

We theoretically investigate the minimum cost of the attack to successfully steer the optimal 
LLM policy toward the attacker's target policy during the RLHF/DPO phase. 
We consider that an attacker who operates as an annotator, tasked with evaluating two outputs $y$ and $z$ in a given context $x$, and providing a binary preference label ($w = 1$ if $y$ is preferred; otherwise $w=-1$). 
While the attacker can not modify $(x, y, z)$, they can arbitrarily set the preference label $w$.
Our goal is to determine the minimum number of label flips from the benign labels required to induce the attacker's desired behavior and to design a method for constructing a malicious dataset that achieves this objective with minimal cost. 
By quantifying these costs, our analysis is expected to guide the design of robust RLHF/DPO pipelines that can detect or mitigate such low-cost poisoning attacks, ensuring the safe deployment of LLMs in real-world 
settings.

\paragraph{Contributions}

In this work, we provide the first theoretical analysis of the minimal cost required to steer LLM policies via label-flipping attacks during RLHF/DPO alignment. By formulating the problem as a convex (or linear) optimization, we derive tight lower and upper bounds on the minimum number of label flips needed to induce a target policy. As a byproduct of this analysis, we develop a post-processing method that can be applied to any existing label-flipping attack to reduce its cost while preserving its intended poisoning effect. This approach is particularly effective in practical LLM alignment pipelines, where the dataset size is significantly greater than the feature dimension of the reward model, enabling attackers to exploit redundancy in the data in the feature space to minimize the cost of targeted poisoning. 

\section{Preliminaries}

LLM alignment is often conducted in two stages: supervised fine-tuning (SFT) and learning from human feedback. 
Given an LLM pre-trained with a large corpus in an unsupervised manner, it is fine-tuned using human-annotated input-output pairs $(x, y)$ to produce more relevant output for specific downstream tasks of interest. 
Learning from human feedback then aims to further align LLMs with human preferences. 
In this step, the LLM is trained using a dataset of preferences, $\calD_L = \{(x, y, z, w)\}$, where $x$ is the input, $y$ and $z$ are two candidate outputs, and $w \in \{-1, 1\}$ is the preference label indicating whether $y$ is preferred to $z$ ($w = 1$) or not ($w = -1$). The LLM is trained to assign higher probabilities to preferred outputs.
Reinforcement learning from human feedback (RLHF) is often employed for this purpose. 

RLHF first trains a reward model $r(x, y)$ from the preference dataset. 
The human preference is typically modeled as the Bradley-Terry model:
\begin{equation}
\Pr[w = 1 \mid x, y, z] = \sigma(r(x, y) - r(x, z)) ,
\end{equation}
where $\sigma(t) = \frac{1}{1 + \exp(-t)}$ is the sigmoid function.
The reward model is trained via maximum likelihood estimation by minimizing
\begin{align}
\mathcal{L}(r) = - \sum_{(x,y,z,w) \in \calD_L} \log \sigma( w (r(x, y) - r(x, z))).\label{eq:rlhf-loss}
\end{align}
Let $\widehat{r}$ denote the obtained reward model.
The LM policy $\pi$ is then trained to maximize the obtained reward under the KL-regularization:
\begin{align}
\E_{x \sim \rho}[ \E_{y \sim \pi(y\mid x)}[\widehat{r}(x, y)] - \tau D_\mathrm{KL}(\pi \parallel \piref)],\label{eq:rlhf}
\end{align}
where $\rho$ is a distribution over the context; $\piref$ is the reference policy, typically the SFT policy used to initialize RLHF; and $\tau$ is a parameter controlling the deviation from $\piref$.

Direct Preference Optimization (DPO) is an alternative to RLHF that directly optimizes the LM policy from the preference dataset. 
The optimal policy that maximizes \eqref{eq:rlhf} is:
\begin{align}
\pi_{r}(y\mid x) = \frac{1}{Z_{r, \piref}(x)}\piref(y\mid x) \exp( \tau^{-1} r(x, y)), \label{eq:optpi}
\end{align}
where 
\begin{align}
Z_{r, \piref}(x) = \sum_{y} \piref(y\mid x) \exp( \tau^{-1} r(x, y)) . 
\end{align}
Therefore, an LM policy $\pi$ can be viewed as the optimal policy under the reward function:
\begin{equation}
 r(x, y) = \tau\log \frac{\pi(y\mid x)}{\piref(y\mid x)} + \tau\log Z_{r, \piref}(x) .
\end{equation}
By substituting this expression into \eqref{eq:rlhf-loss}, one obtains the corresponding DPO objective. It is known that the optimal policies for RLHF and DPO coincide \citep{ipo}, and several variants of DPO have been proposed in the literature \citep{ipo,spo,kto}. 

\section{Related Works}

Most prior work on data poisoning attacks in machine learning focuses on supervised learning settings, particularly regression or classification tasks \citep{shayegani2023survey}. Typical poisoning objectives include degrading model performance or injecting backdoor triggers. In these settings, attackers can add malicious data to the original dataset, training the victim model on the combined dataset to induce the desired malicious behavior. In contrast, in our setting, the attacker can only flip preference labels in the dataset, making the attack surface more constrained.

Within the LLM alignment pipeline, poisoning attacks have been studied in both the SFT phase \citep{wan2023poisoning,Shu2023neurips} and the RLHF phase \citep{wu2024preference,rlhfpoison,pathmanathan2024poisoning,baumgartner2024best,pathmanathan2024advbdgen,tramer2024universal}. In the SFT phase, human annotators are expected to produce high-quality outputs $y$ for given contexts $x$, creating a natural risk of maliciously crafted pairs $(x, \tilde{y})$. In the RLHF phase, annotators evaluate candidate outputs $y, z$ in a given context $x$ and provide preference labels. While a strong adversary may replace or inject malicious triplets $(x, y, z)$ into the preference dataset \citep{baumgartner2024best,pathmanathan2024advbdgen,tramer2024universal}, arguably the most realistic scenario involves a malicious annotator who can only manipulate the preference label $w$ while the triplet itself remains unchanged \citep{wu2024preference,rlhfpoison,pathmanathan2024poisoning}. This setting is analogous to label-flipping attacks \citep{xiao2012ecai}, but while traditional label-flipping attacks focus on classification, our context involves reward model learning from paired preference data, introducing a distinct problem structure.

The most relevant prior works are \citet{wu2024preference} and \citet{rlhfpoison}, who consider the same setting and develop attack algorithms under both white-box and black-box scenarios. Their empirical investigations reveal that RLHF alignment is vulnerable to label-flipping attacks and that existing defense strategies provide limited protection. However, while these studies empirically demonstrate the vulnerability of RLHF pipelines, our work is the first, to the best of our knowledge, 
to provide a theoretical analysis of the minimal cost of label-flipping attacks required to guide the reward model toward an attacker-specified target. Moreover, we introduce a convex (or linear) programming framework that not only characterizes these costs but also enables practical reduction of attack costs in existing poisoning methods. Specifically, given a candidate malicious dataset, our framework can propose an alternative dataset that induces the same reward function while requiring fewer label flips, highlighting a fundamental and previously unquantified vulnerability in RLHF/DPO pipelines.

\section{Threat Model}

\paragraph{Victim}

The victim has access to a labeled dataset constructed from an unlabeled dataset $\mathcal{D}_U = \{(x_i, y_i, z_i)\}_{i=1}^{N}$, where $x_i \in \mathcal{X}$ is a context, and $y_i, z_i \in \mathcal{Y}$ are two candidate outputs. Each triplet is labeled by external annotators, with the label $w_i = 1$ if $y_i$ is preferred to $z_i$ in context $x_i$, and $w_i = -1$ otherwise. We denote by $\eta(x, y, z)$ the probability that the label for $(x, y, z)$ is $w = 1$. The set of labeled data $(x, y, z, w)$ forms the labeled dataset $\mathcal{D}_L$. Without loss of generality, we assume anti-symmetry of $\eta$, i.e., $\eta(x, y, z) = 1 - \eta(x, z, y)$. In case that multiple ($m \geq 1$) annotations are provided for each triplet, the size of the labeled dataset becomes $m \cdot N$. 

The victim aims to optimize a policy $\pi$ under the labeled dataset $\calD_L$ using RLHF. First, a reward model $r$ is trained by minimizing the empirical loss \eqref{eq:rlhf-loss}.
Once the reward model is trained, the LM policy is optimized to maximize the expected reward under KL regularization as in \eqref{eq:rlhf}. For technical completeness, we assume $\pi_{\mathrm{ref}}(y \mid x) > 0$ for all $(x, y) \in \mathcal{X} \times \mathcal{Y}$. The optimal policy under $r$ is thus given by \eqref{eq:optpi}.

Following \citet{rlhf}, we model the reward function using a pre-trained baseline LLM (typically a fine-tuned model) from which the final unembedding layer is removed to obtain an embedding $\phi: \mathcal{X} \times \mathcal{Y} \to \mathbb{R}^n$. A linear output layer is then added, resulting in the reward model $r(x, y) = \rv^\top \phi(x, y)$, where $\rv \in \mathbb{R}^n$ is referred to as the \emph{reward vector}. 
During the reward training phase, we consider two settings: (1) only the reward vector $\rv$ is trained while the embedding $\phi$ is fixed, and (2) both the reward vector $\rv$ and the embedding $\phi$ are trained.

\paragraph{Attacker}
The attacker's objective is to guide the victim's optimal policy toward a target policy $\pi_A$ with minimal cost. Due to the structure of the optimal policy \eqref{eq:optpi}, only policies that are optimal under reward functions representable by the victim's reward model are valid. Thus, the attack reduces to steering the reward model to a target reward function $r_A$ at minimum cost. The target reward function may be hand-crafted or obtained using attack methods such as \citet{wu2024preference,rlhfpoison}. In the latter case, the objective becomes minimizing the cost required to achieve the same poisoning effect.

We assume the attacker has access to the labeled dataset $\mathcal{D}_L$. The attacker can flip labels $w_i$ to $w_i^A \in \{-1, 1\}$ at a cost defined later. 
However, the attacker cannot modify the input $x$ or candidate outputs $y, z$ in the dataset.

\paragraph{Discrepancy Between Theory and Practice}

For theoretical analysis in the following section, we idealize both the attacker’s capabilities and the victim’s training process. 
We assume the attacker can directly modify the annotation probability $\eta$ rather than flipping individual labels $w$ (i.e., the attacker's action is not binary $\{-1, 1\}$, but continuous $[0, 1]$), and that the victim minimizes the expected loss under $\eta$ (i.e., population version) rather than the empirical loss \eqref{eq:rlhf-loss}. In practice, due to the finite dataset, realizable annotation probabilities are discrete, being multiples of the reciprocal of the count $m$ of each $(x, y, z)$ in the dataset (i.e., $m$ is the number of annotations per datum).
These idealizations enable us to derive theoretical guarantees, while we acknowledge the risk of deviation from practical scenarios, evaluated empirically.

\section{Minimum Cost Attack}

The objective of this study is to identify the cost of the attacker to realize the target policy $\pi_{r_A}$ by label flipping.
Arguably, the most natural choice for the cost of the attacker is the amount of flipped labels, formulated as
\begin{align}
&\text{(practical)} \sum_{(x_i, y_i, z_i, w_i) \in \calD_L} \abs{w_i - w_i^A}  \\
&\text{or (ideal)}\sum_{(x_i, y_i, z_i) \in \calD_U} \abs{\eta(x_i, y_i, z_i) - \eta_A(x_i, y_i, z_i)},\label{eq:l1cost}
\end{align}
where $w_i^A$ is the label after the attack and $\eta_A(x_i, y_i, z_i)$ is the probability of $w_i^A$ being $1$.

More generally, the cost can be measured by using a norm $\norm{\cdot}$. 
From now on, we focus on the ideal situation where the labels $w_i$ follow the probabilities $\eta(x_i, y_i, z_i)$ and the loss function is defined by the expectation of $\mathcal{L}(r)$ with respect to $w_i$, denoted as $\mathcal{L}(r; \eta)$. 
The attack target $r_A$ may be given explicitly, or derived by some other poisoning attack. 
Let $\theta_O$ be a vector of dimension $N$ whose elements are $\eta(x_i, y_i, z_i)$ for $(x_i, y_i, z_i, w_i) \in \calD_L$ and $\theta_A$ a vector of dimension $N$ whose elements are $\eta_A(x_i, y_i, z_i)$. 
The attack cost measured by $\norm{\cdot}$ is defined as
$\norm{\theta_A - \theta_O}$, which recovers \eqref{eq:l1cost} if the norm is the $\ell_1$-norm.

The attacker tries to minimize the cost while realizing the desired policy $\pi_{r_A}$. 
Because of the form of the optimal policy \eqref{eq:optpi}, different reward functions can lead to the same optimal policy. For example, $r(x, y)$ and $r(x, y) + R(x)$ for any $R: \mathcal{X} \to \R$ admit the same optimal policy. Let $\mathcal{R}(\pi) = \{r : \pi_r = \pi\}$ be the set of reward functions for which the optimal policy is $\pi$. 
The attacker's cost minimization problem is formulated with a vector representation $\theta$ of $\eta$ as follows:
\begin{subequations}
\begin{align}
\min_{\theta}\quad & \norm{\theta - \theta_O},\label{eq:prob-gen-a}
\\
\mathrm{s.t.}\quad & \argmin_r \mathcal{L}(r; \theta) \subseteq \mathcal{R}(\pi_{r_A}),\label{eq:prob-gen-b}
\\
&
\norm{2 \theta - \mathbf{1}}_{\infty} \leq 1,\label{eq:prob-gen-c}
\end{align}\label{eq:prob-gen}%
\end{subequations}%
where, with abuse of notation, $\mathcal{L}(r; \theta)$ means $\mathcal{L}(r; \eta)$, and \eqref{eq:prob-gen-c} is introduced to ensure $\eta \in [0, 1]$. 
In the following, we investigate the minimum cost of the poisoning attack theoretically. In particular, we are interested in the lower and upper bounds of the minimum cost to realize the target reward function. 

\subsection{Fixed Embedding}\label{sec:fixed_embedding}

First, we consider the situation where the embedding $\phi$ is fixed during the training. 
The proofs for the theoretical results in this section can be found 
in Appendix. 

The optimization problem \eqref{eq:prob-gen} can be infeasible in cases that the loss function $\mathcal{L}$ admits multiple minimum solutions that lead to different optimal policies. 
However, we can derive that the optimal reward function for the loss \eqref{eq:rlhf-loss} is uniquely determined, showing that \eqref{eq:prob-gen} is valid, under a mild assumption. 
Hereafter, we consider the fixed embedding situation. 
Let $\Phi$ be a matrix of dimension $n \times N$ whose $i$-th column is $\phi(x_i, y_i) - \phi(x_i, z_i)$.
Its Moore-Penrose pseudo-inverse is denoted as $\Phi^\dagger$.
The row and column spaces of $\Phi$ are denoted as $\operatorname{row}(\Phi)$ and $\operatorname{col}(\Phi)$, respectively. 

To proceed theoretical analysis, we assume the following, which is naturally satisfied if $n < N$.\footnote{It holds when $\Phi$ is of full row rank, which occurs with probability 1 for random $\Phi$. We confirmed that all LLM feature matrices used in the experiments satisfied this condition.}
Intuitively, it implies that if two reward functions agree on all data points in $\calD_U$, they must agree everywhere up to a context-dependent offset $R(x)$, meaning that the reward function is fully determined by its values on the dataset.
\begin{assumption}\label{asm}
For any $(x, y, z) \in \mathcal{X}\times\mathcal{Y}\times\mathcal{Y}$, $\phi(x, y) - \phi(x, z) \subseteq \operatorname{col}(\Phi)$. 
\end{assumption}

The optimization problem \eqref{eq:prob-gen} can be transformed as a convex optimization problem.
\begin{theorem}\label{thm:cvxopt}
Suppose that \Cref{asm} holds.
Let $\zeta = \theta - \theta_O$. 
Then, the minimum cost poisoning attack problem \eqref{eq:prob-gen} is equivalently formulated as a convex optimization problem with linear equality and inequality conditions:%
\begin{subequations}%
\begin{align}%
\min_{\zeta} \ \norm{\zeta} 
\quad \mathrm{s.t.} \quad & \Phi \zeta = \Phi (\theta_A - \theta_O), \\
& - \theta_O \leq \zeta \leq (\bm{1} - \theta_O), 
\end{align}%
\label{eq:primal-linear}%
\end{subequations}%
where $\leq$ denotes element-wise comparison, as used hereafter.
\end{theorem}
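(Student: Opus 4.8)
The plan is to collapse the bilevel structure of \eqref{eq:prob-gen} into the inner first-order optimality condition and then linearize it. Write $s = \Phi^\top\rv \in \R^N$ for the vector of reward gaps $r(x_i,y_i) - r(x_i,z_i)$ over $\calD_U$. Taking the expectation of \eqref{eq:rlhf-loss} over the labels with $\Pr[w_i=1]=\theta_i$ gives
\begin{equation}
\mathcal{L}(r;\theta) = -\sum_{i=1}^{N}\bigl[\theta_i\log\sigma(s_i) + (1-\theta_i)\log\sigma(-s_i)\bigr],
\end{equation}
so $\mathcal{L}$ depends on $\rv$ only through $s$. Each summand is strictly convex in $s_i$ with second derivative $\sigma(s_i)(1-\sigma(s_i))>0$, and $\nabla_\rv\mathcal{L} = \Phi(\sigma(s)-\theta)$ with $\sigma$ applied elementwise. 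First I would invoke strict convexity of $\mathcal{L}$ restricted to the subspace $\{\Phi^\top\rv:\rv\in\R^n\}=\operatorname{row}(\Phi)$ to conclude that the optimal gap vector $s^\star$ is unique (even when the minimizing $\rv$ is not), which both justifies the validity of \eqref{eq:prob-gen} and characterizes minimizers as precisely the $\rv$ with $\Phi(\sigma(\Phi^\top\rv)-\theta)=0$.

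Next I would rewrite the policy constraint \eqref{eq:prob-gen-b} as an equation on gaps. From \eqref{eq:optpi}, $\pi_r=\pi_{r_A}$ holds iff $r(x,y)-r_A(x,y)$ depends only on $x$, i.e. iff $(\rv-\rv_A)^\top(\phi(x,y)-\phi(x,z))=0$ for all $(x,y,z)$, where $\rv_A$ is the reward vector of $r_A$. The forward direction applied to the dataset triples gives $\Phi^\top\rv=\Phi^\top\rv_A$; conversely, \Cref{asm} places every difference $\phi(x,y)-\phi(x,z)$ in $\operatorname{col}(\Phi)$, so $\Phi^\top\rv=\Phi^\top\rv_A$ implies the orthogonality for all triples. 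Hence $\pi_r=\pi_{r_A}\iff s=s_A$ with $s_A:=\Phi^\top\rv_A$, and combined with the uniqueness of $s^\star$ the inclusion \eqref{eq:prob-gen-b} collapses to the single equality $s^\star=s_A$.

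Finally I would merge the two characterizations. Because $s_A\in\operatorname{row}(\Phi)$, the point $s_A$ is the unique minimizer of $\mathcal{L}(\cdot;\theta)$ over $\operatorname{row}(\Phi)$ iff it satisfies the stationarity condition $\Phi(\sigma(s_A)-\theta)=0$; since the Bradley--Terry-consistent target obeys $\theta_A=\sigma(s_A)$, this is exactly $\Phi\theta=\Phi\theta_A$. Substituting $\zeta=\theta-\theta_O$ turns this into $\Phi\zeta=\Phi(\theta_A-\theta_O)$, while the box constraint \eqref{eq:prob-gen-c} satisfies $\norm{2\theta-\bm{1}}_\infty\leq 1\iff\bm{0}\leq\theta\leq\bm{1}$, i.e. $-\theta_O\leq\zeta\leq\bm{1}-\theta_O$, and the objective $\norm{\theta-\theta_O}=\norm{\zeta}$ is unchanged; the result is a norm minimized over a polyhedron, hence convex. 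I expect the main obstacle to be this reduction of the set inclusion \eqref{eq:prob-gen-b} to a single linear equality: it requires the strict-convexity uniqueness of $s^\star$ so that the argmin represents one policy, the careful use of \Cref{asm} to upgrade ``equal policies'' to ``equal gaps $\Phi^\top\rv=\Phi^\top\rv_A$'' rather than equality only modulo $\ker\Phi$, and the verification that $s_A\in\operatorname{row}(\Phi)$ so that stationarity genuinely pins $s_A$ down as the minimizer.
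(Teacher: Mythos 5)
Your proposal is correct and follows essentially the same route as the paper's proof: your reduction of the policy constraint to gap equality $s=s_A$ via \Cref{asm} reproduces \Cref{lem:welldefined,lem:BTequal,lem:rset}, your stationarity characterization $\Phi(\sigma(s)-\theta)=0 \iff \Phi\theta=\Phi\theta_A$ is exactly \Cref{lem:reward-linear}, and the box-constraint translation is identical. The only cosmetic difference is that you derive uniqueness of the optimal gap vector from strict convexity in the $s$-coordinates, whereas the paper reaches the same fact by computing the Hessian $\Phi\diag(\theta_r)(I-\diag(\theta_r))\Phi^\T$ and arguing via a quadratic-form contradiction.
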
%

That is, by solving the convex optimization problem \eqref{eq:primal-linear} and letting $\zeta^*$ be its optimal solution, one can obtain the preference probability $\theta_A^* = \theta_O + \zeta^*$ that leads to the same target reward function in $\mathcal{R}(\pi_{r_A})$ as $\theta_A$ with a reduced or equal cost $\norm{\theta_A^* - \theta_O} \leq \norm{\theta_A - \theta_O}$. It is irrelevant to the way of crafting $\theta_A$.

This optimization problem has a convex objective function and linear constraints. Therefore, one can obtain a solution to this problem, $\zeta^*$, by using a standard convex optimization solver. In case of the $\ell_1$ attack cost, this can be reformulated as a linear programming problem by decomposing $\zeta$ as $\zeta = \zeta_+ - \zeta_-$ where $\zeta_+, \zeta_- \in \R_{+}^{N}$ and rewriting $\norm{\zeta}$ as $\norm{\zeta} = \bm{1}_{N}^\T (\zeta_+ + \zeta_-)$. Therefore, one can employ a linear programming solver to obtain the optimum solution. 
See Appendix for details. 

By considering the Lagrangian dual problem of the primal problem \eqref{eq:primal-linear}, we can derive the minimum attack cost bounds.
It is derived as follows. 
The primal problem is a convex optimization problem with linear equality and inequality constraints. 
Therefore, if a relaxed Slater condition is satisfied, i.e., a feasible solution exists, then the strong duality holds and the solution to the dual problem provides the minimum value of the primal problem. 
In our case, $\zeta = \theta_A - \theta_O$ is a feasible solution.
Hence, the strong duality holds and the maximum value of the dual problem is the minimum value of the primal problem.

Based on the above argument, a lower bound and an upper bound of the minimum cost are derived.
\begin{theorem}\label{lem:lower}
The minimum cost of \eqref{eq:primal-linear} is lower bounded by
\begin{equation}
\frac{ \norm{(\Phi^\dagger \Phi) (\theta_A - \theta_O)}_2^2 }{ \norm{(\Phi^\dagger \Phi) (\theta_A - \theta_O)}_* }.\label{eq:lower}
\end{equation}
\end{theorem}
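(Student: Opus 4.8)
The plan is to prove the bound by \emph{weak} duality, which (unlike the tightness discussion sketched above) requires no Slater/strong-duality argument at all. First I would relax the primal problem \eqref{eq:primal-linear} by discarding the box constraints $-\theta_O \leq \zeta \leq (\bm{1} - \theta_O)$: dropping constraints can only enlarge the feasible set, so the optimum of the relaxed problem is a valid lower bound on the true minimum cost. Writing $\delta = \theta_A - \theta_O$, the relaxation is simply to minimize $\norm{\zeta}$ over the affine set $\cbra{\zeta : \Phi \zeta = \Phi \delta}$, and it suffices to lower bound $\norm{\zeta}$ for every feasible $\zeta$.

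The core estimate comes from pairing the equality constraint with a fixed dual vector $\lambda \in \R^n$ and applying the generalized Cauchy--Schwarz (H\"older) inequality $\inner{a}{b} \leq \norm{a}_* \norm{b}$, where $\norm{\cdot}_*$ is the dual norm of $\norm{\cdot}$. For any feasible $\zeta$ one has $\inner{\Phi^\top \lambda}{\zeta} = \inner{\lambda}{\Phi \zeta} = \inner{\lambda}{\Phi \delta}$, and hence $\norm{\zeta} \geq \inner{\lambda}{\Phi \delta} / \norm{\Phi^\top \lambda}_*$ whenever $\norm{\Phi^\top \lambda}_* > 0$. It then remains to choose $\lambda$ so that this ratio reproduces \eqref{eq:lower}.

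The right certificate is any $\lambda$ satisfying $\Phi^\top \lambda = \Phi^\dagger \Phi\, \delta$. Such a $\lambda$ exists because $\Phi^\dagger \Phi\, \delta$ lies in $\operatorname{row}(\Phi) = \operatorname{col}(\Phi^\top)$, so e.g. $\lambda = (\Phi^\top)^\dagger \Phi^\dagger \Phi\, \delta$ works. With this choice the denominator is exactly $\norm{\Phi^\dagger \Phi\, \delta}_*$, while the numerator simplifies using that $\Phi^\dagger \Phi$ is a symmetric idempotent (the orthogonal projector onto $\operatorname{row}(\Phi)$): $\inner{\lambda}{\Phi \delta} = \inner{\Phi^\top \lambda}{\delta} = \inner{\Phi^\dagger \Phi\, \delta}{\delta} = \inner{\Phi^\dagger \Phi\, \delta}{\Phi^\dagger \Phi\, \delta} = \norm{\Phi^\dagger \Phi\, \delta}_2^2$. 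Substituting yields precisely the bound \eqref{eq:lower} (the degenerate case $\Phi^\dagger \Phi\, \delta = 0$ forces $\Phi \delta = 0$, so $\zeta = 0$ is feasible and the minimum cost is $0$, consistent with the convention).

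I expect the only delicate points to be (i) verifying that the candidate certificate is well defined, i.e.\ that $\Phi^\dagger \Phi\, \delta \in \operatorname{col}(\Phi^\top)$ so a feasible $\lambda$ exists, and (ii) the projector identity $\inner{P \delta}{\delta} = \norm{P \delta}_2^2$ for $P = \Phi^\dagger \Phi$, which follows from $P = P^\top = P^2$. Everything else is routine. Note that since this is purely a weak-duality argument, the bound holds for the full problem \eqref{eq:primal-linear} irrespective of the box constraints; the strong duality established earlier would only be needed to show that this lower bound is actually attained.
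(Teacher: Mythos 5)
Your proof is correct, and it reaches the paper's bound by a leaner route than the paper itself. The paper first derives the full Lagrangian dual of \eqref{eq:primal-linear} (\Cref{lem:dual-linear}), with multipliers $\nu_1, \nu_0$ for the box constraints, a convex-conjugate computation, and an appeal to strong duality via the relaxed Slater condition; it then obtains \eqref{eq:lower} by setting $\nu_1 = \nu_0 = 0$ and evaluating the dual objective at $\lambda = -(\Phi^\dagger)^\T(\theta_A - \theta_O)/\norm{(\Phi^\dagger\Phi)(\theta_A - \theta_O)}_*$. You instead drop the box constraints on the primal side (which is exactly the primal counterpart of setting $\nu_1 = \nu_0 = 0$ in the dual) and apply weak duality in its most elementary form: pair any feasible $\zeta$ against one certificate via H\"older's inequality. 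In fact, your certificate is the paper's: writing $\delta = \theta_A - \theta_O$, the Moore--Penrose identities give $(\Phi^\dagger)^\T \Phi^\dagger \Phi = (\Phi^\dagger)^\T$, so your $\lambda = (\Phi^\T)^\dagger \Phi^\dagger \Phi\, \delta$ equals $(\Phi^\dagger)^\T \delta$, which is the paper's multiplier up to sign (a Lagrangian convention) and normalization (you carry the scale in the ratio $\inner{\lambda}{\Phi\delta}/\norm{\Phi^\T\lambda}_*$ rather than scaling $\Phi^\T\lambda$ to unit dual norm, as the paper does). Your steps all check out: existence of $\lambda$ follows from $\Phi^\dagger\Phi\,\delta \in \operatorname{row}(\Phi) = \operatorname{col}(\Phi^\T)$, and $\inner{\Phi^\dagger\Phi\,\delta}{\delta} = \norm{\Phi^\dagger\Phi\,\delta}_2^2$ follows from $\Phi^\dagger\Phi$ being a symmetric idempotent. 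What your packaging buys: no Lagrangian, no conjugate function, no Slater/strong-duality argument---you rightly observe that a lower bound needs only weak duality---plus an explicit handling of the degenerate case $\Phi^\dagger\Phi\,\delta = 0$, which the paper leaves implicit (its $\lambda$ is undefined there). What the paper's heavier route buys is the full dual problem itself, which certifies that this bound arises from a dual-feasible point of the unrelaxed problem and, via strong duality, that the optimal dual value actually equals the minimum cost; as you correctly note at the end, that machinery matters only for attainment and tightness, not for the bound stated in \Cref{lem:lower}.
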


\begin{theorem}\label{lem:upper}
Let $\theta^* = \theta_O + (\Phi^\dagger \Phi) (\theta_A - \theta_O)$.
Let $\alpha^* = \max\{ \norm{\theta^* - 0.5\cdot \bm{1}}_\infty - 0.5, 0 \}$ and $\bar{\alpha} = 0.5 - \norm{\theta_A - 0.5}$.
The minimum cost of \eqref{eq:primal-linear} is upper bounded by
\begin{equation}
\min\left\{ \norm*{ \left( \frac{\alpha^* I + \bar{\alpha} \Phi^\dagger \Phi}{\alpha^* +\bar{\alpha}}\right) (\theta_A - \theta_O)}, \ \norm{\theta_A - \theta_O} \right\} . \label{eq:upper}
\end{equation}
\end{theorem}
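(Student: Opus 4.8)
The plan is to prove an upper bound by exhibiting explicit feasible points of \eqref{eq:primal-linear}: since the minimum is an infimum over the feasible set, the objective value $\norm{\zeta}$ at \emph{any} feasible $\zeta$ is an upper bound on the minimum, and the smaller of two such values yields the stated $\min$. The second argument $\norm{\theta_A - \theta_O}$ is the easy one: taking $\zeta = \theta_A - \theta_O$ satisfies the equality $\Phi\zeta = \Phi(\theta_A - \theta_O)$ trivially, while the box constraint $-\theta_O \leq \zeta \leq \bm{1} - \theta_O$ reduces to $\bm{0} \leq \theta_A \leq \bm{1}$, which holds because $\theta_A$ collects annotation probabilities in $[0,1]$. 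So the real work is constructing a second feasible point whose cost equals the first argument.

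For that, I would take $\zeta = \frac{\alpha^* I + \bar{\alpha}\,\Phi^\dagger\Phi}{\alpha^* + \bar{\alpha}}(\theta_A - \theta_O)$, whose cost is by construction exactly the first term, and verify its feasibility. The equality constraint follows from the Moore--Penrose identity $\Phi\Phi^\dagger\Phi = \Phi$: writing $\zeta$ as the convex combination $\lambda(\theta_A - \theta_O) + (1-\lambda)(\Phi^\dagger\Phi)(\theta_A - \theta_O)$ with $\lambda = \alpha^*/(\alpha^* + \bar{\alpha})$, each summand is mapped to $\Phi(\theta_A - \theta_O)$ by $\Phi$, hence $\Phi\zeta = \Phi(\theta_A - \theta_O)$. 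The key algebraic observation is that the candidate $\theta = \theta_O + \zeta$ simplifies to a convex combination of the two anchor points, $\theta = \lambda\theta_A + (1-\lambda)\theta^*$, because the $\theta_O$ contributions cancel once $(\Phi^\dagger\Phi)(\theta_A - \theta_O)$ is rewritten as $\theta^* - \theta_O$ using the definition of $\theta^*$.

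The crux is then the box constraint, equivalently $\norm{\theta - \tfrac12\bm{1}}_\infty \leq \tfrac12$. Centering at $\tfrac12\bm{1}$ and applying the triangle inequality to the convex combination gives $\norm{\theta - \tfrac12\bm{1}}_\infty \leq \lambda\,\norm{\theta_A - \tfrac12\bm{1}}_\infty + (1-\lambda)\,\norm{\theta^* - \tfrac12\bm{1}}_\infty$. By the definitions, $\norm{\theta_A - \tfrac12\bm{1}}_\infty = \tfrac12 - \bar{\alpha}$, and, in the regime $\alpha^* > 0$, $\norm{\theta^* - \tfrac12\bm{1}}_\infty = \tfrac12 + \alpha^*$. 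Substituting $\lambda = \alpha^*/(\alpha^* + \bar{\alpha})$ makes the cross terms $\pm\,\alpha^*\bar{\alpha}$ cancel, collapsing the bound to exactly $\tfrac12$; thus $\theta$ lies in the box and $\zeta$ is feasible. The weights $\alpha^*$ and $\bar{\alpha}$ are precisely tuned to force this cancellation, which is the conceptual heart of the argument: $\bar{\alpha}$ measures the interior margin of $\theta_A$ and $\alpha^*$ the amount by which $\theta^*$ overshoots the box, and the convex weight is chosen so that the overshoot is exactly absorbed by the margin.

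The main obstacle I anticipate is handling the degenerate regimes where $\alpha^* = 0$ or $\bar{\alpha} = 0$. When $\alpha^* = 0$ the matrix reduces to $\Phi^\dagger\Phi$ and $\theta = \theta^*$, which already lies in the box by the very definition of $\alpha^*$, so feasibility is direct. When $\bar{\alpha} = 0$ (i.e.\ $\theta_A$ on the box boundary) the ratio defining $\zeta$ is ill-posed, and I would simply fall back to the always-feasible point $\theta_A - \theta_O$; this is exactly why the outer $\min$ with $\norm{\theta_A - \theta_O}$ appears in the statement. Combining the two feasible points, the minimum cost is bounded above by the smaller of their costs, giving \eqref{eq:upper}.
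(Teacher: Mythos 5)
Your proposal is correct and follows essentially the same route as the paper's proof: both exhibit the convex combination $\theta = \frac{\alpha^*}{\alpha^*+\bar{\alpha}}\theta_A + \frac{\bar{\alpha}}{\alpha^*+\bar{\alpha}}\theta^*$ as a feasible point, verifying the equality constraint via $\Phi\Phi^\dagger\Phi = \Phi$ and the box constraint $\norm{\theta - 0.5\cdot\bm{1}}_\infty \leq 0.5$ by the triangle inequality with exactly the cancellation of the $\alpha^*\bar{\alpha}$ cross terms you describe, while the trivial point $\zeta = \theta_A - \theta_O$ supplies the second argument of the $\min$. Your explicit treatment of the degenerate regimes $\alpha^* = 0$ and $\bar{\alpha} = 0$ is a minor refinement the paper omits, but it does not change the substance of the argument.
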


\begin{remark}
The matrix $\Phi^\dagger \Phi$ defines the orthogonal projection from $\R^{N}$ to a subspace spanned by the rows of $\Phi$, whose rank is at most $n$. If the cost is defined by the $\ell_2$ norm, we always have $\norm{ (\Phi^\dagger \Phi) (\theta_A - \theta_O)}_2 \leq \norm{\theta_A - \theta_O}_2$. 
The discrepancy between $\frac{ \norm{(\Phi^\dagger \Phi) (\theta_A - \theta_O)}_2^2 }{ \norm{(\Phi^\dagger \Phi) (\theta_A - \theta_O)}_* }$ in \eqref{eq:lower} and $\norm{ (\Phi^\dagger \Phi) (\theta_A - \theta_O)}$ in \eqref{eq:upper} comes from the primal-dual norm relation, i.e., $\norm{\zeta}_2^2 \leq \norm{\zeta} \norm{\zeta}_*$. 
In this sense, these bounds are tight because the equality holds for some $\zeta$. 
\end{remark}

These theorems indicate that the cost of a poisoning attack can be reduced significantly.
From the defense perspective, it suggests that the victim must be prepared for the attack to guide the reward model arbitrarily in
\begin{equation}
\Theta_{k}^{A} = \left\{ \theta \mid \frac{ \norm{(\Phi^\dagger \Phi) (\theta - \theta_O)}_2^2 }{ \norm{(\Phi^\dagger \Phi) (\theta - \theta_O)}_* } \leq k \right\}
\end{equation}
if $k$ data points are annotated by an untrusted annotator.
It can be significantly wider than the set corresponding to the naive attack with cost no greater than $k$, i.e.,
\begin{equation}
\widetilde{\Theta}_{k}^A = \left\{ \theta \mid  \norm{\theta - \theta_O} \leq k \right\},
\end{equation}
in particular, when the rank of $\Phi^\dagger \Phi$ is significantly smaller than its dimension, which corresponds to the situation that the number $n$ of features is significantly smaller than the number $N$ of data points. It helps us assess the security risk of allowing untrusted individuals to annotate data.

We investigate the influence of the choice of the embedding $\phi$ (i.e., $\Phi$) on the minimum attack cost.
\Cref{lem:phi} states that, if the feature extractor is fixed, then the more capable the feature extractor's representation power is, the more cost the attacker needs to spend to realize the same target reward.
It suggests that a greater number $n$ of features results in models that are more robust against label flipping attacks.
\begin{proposition}\label{lem:phi}
Suppose that $\phi_1 : \mathcal{X} \times \mathcal{Y} \to \R^{n_1}$ and $\phi_2 : \mathcal{X} \times \mathcal{Y} \to \R^{n_2}$ satisfies \Cref{asm} and $\operatorname{row}(\Phi_1) \subseteq \operatorname{row}(\Phi_2)$.
If the target reward function is $r_A = \rv_{A,1}^\T \phi_1 = \rv_{A,2}^\T \phi_2$, then a feasible solution $\zeta_2$ to the problem under $\phi_2$ is also feasible under $\phi_1$.
Therefore, the minimum value of the problem \eqref{eq:primal-linear} under $\Phi_1$ is no greater than that under $\Phi_2$.
\end{proposition}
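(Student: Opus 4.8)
The plan is to show that the two instances of problem~\eqref{eq:primal-linear}, one built from $\Phi_1$ and one from $\Phi_2$, differ \emph{only} in their equality constraints, and that the equality constraint under $\Phi_2$ is more restrictive than the one under $\Phi_1$. First I would observe that the box constraint $-\theta_O \le \zeta \le \bm{1}-\theta_O$ and the two reference vectors $\theta_O,\theta_A$ are common to both instances: $\theta_O$ is the benign annotation-probability vector and $\theta_A$ is the image of the target reward $r_A$ under the Bradley--Terry map, and neither depends on the choice of embedding. The hypothesis $r_A = \rv_{A,1}^\T \phi_1 = \rv_{A,2}^\T \phi_2$ ensures that the \emph{same} target reward function $r_A$ underlies both instances, so $\theta_A$, and hence the right-hand side $\theta_A-\theta_O$, is genuinely shared. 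Thus the objective $\norm{\zeta}$ and the box agree, and only the matrices $\Phi_1,\Phi_2$ in the equality constraint differ.

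Next I would recast the equality constraint as a kernel-membership condition: $\Phi_i \zeta = \Phi_i(\theta_A-\theta_O)$ is equivalent to $\Phi_i\bigl(\zeta-(\theta_A-\theta_O)\bigr)=0$, i.e.\ $\zeta-(\theta_A-\theta_O)\in\ker\Phi_i$. Since the kernel of a matrix is the orthogonal complement of its row space in $\R^N$, we have $\ker\Phi_i=\operatorname{row}(\Phi_i)^{\perp}$. The key linear-algebraic step is then the inclusion-reversing property of orthogonal complements: from $\operatorname{row}(\Phi_1)\subseteq\operatorname{row}(\Phi_2)$ it follows at once that $\ker\Phi_2=\operatorname{row}(\Phi_2)^{\perp}\subseteq\operatorname{row}(\Phi_1)^{\perp}=\ker\Phi_1$.

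With this inclusion, the feasibility transfer is immediate and yields the value bound. Take any $\zeta_2$ feasible for the $\Phi_2$-instance; its equality residual satisfies $\zeta_2-(\theta_A-\theta_O)\in\ker\Phi_2\subseteq\ker\Phi_1$, so $\Phi_1\zeta_2=\Phi_1(\theta_A-\theta_O)$, meaning $\zeta_2$ also satisfies the $\Phi_1$-equality, while the box constraint holds because it is identical in both instances. Hence $\zeta_2$ is feasible under $\Phi_1$, which is the first assertion. Both instances then minimize the same $\norm{\zeta}$ over nonempty compact feasible sets (each contains $\zeta=\theta_A-\theta_O$ and is an affine subspace intersected with the box), so both minima are attained; as the feasible set under $\Phi_1$ contains that under $\Phi_2$, minimizing the same objective over a larger set cannot increase the optimum, giving $\min_{\Phi_1}\norm{\zeta}\le\min_{\Phi_2}\norm{\zeta}$.

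There is no substantial obstacle here: the argument is essentially the inclusion-reversing behaviour of orthogonal complements applied to the reformulated constraint of \Cref{thm:cvxopt}. The only points demanding care are (i) confirming that $\theta_A$ and $\theta_O$, and therefore the common right-hand side $\theta_A-\theta_O$, really are embedding-independent, which is exactly what the shared-target hypothesis $r_A=\rv_{A,1}^\T\phi_1=\rv_{A,2}^\T\phi_2$ supplies, and (ii) getting the direction of the complement inclusion correct, so that the larger row space $\operatorname{row}(\Phi_2)$ yields the smaller kernel and hence the tighter feasible region.
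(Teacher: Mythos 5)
Your proof is correct and follows essentially the same strategy as the paper's: show that the $\Phi_2$ equality constraint implies the $\Phi_1$ equality constraint (the box constraints and right-hand side $\theta_A-\theta_O$ being shared), so the $\Phi_2$-feasible set is nested inside the $\Phi_1$-feasible set and the minimum of the common objective can only decrease. The only difference is cosmetic: the paper derives the constraint implication through a chain of pseudo-inverse projection identities ($\Phi_1^\dagger\Phi_1\,\Phi_2^\dagger\Phi_2 = \Phi_1^\dagger\Phi_1$), whereas you use the equivalent and arguably cleaner observation that $\operatorname{row}(\Phi_1)\subseteq\operatorname{row}(\Phi_2)$ gives $\ker\Phi_2\subseteq\ker\Phi_1$ by orthogonal-complement reversal; your added remark on attainment via compactness of the feasible set is a small bonus the paper omits.
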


\subsection{Adaptive Embedding}\label{sec:adaptive_embedding}

Now we consider a more general reward model 
\begin{equation}
r(x, y) = \inner{\rv}{\phi_{\omega}(x, y)},\label{eq:nn-reward}
\end{equation}
where $\phi$ is an adaptive embedding parameterized by $\omega$. 
Suppose that the attacker's target reward is expressed as
\begin{equation}
r_A(x, y) = \inner{\rv_A}{\phi_{\omega_A}(x, y)}\label{eq:ra_nn}
\end{equation}
with the reward vector $\rv_A$ and the parameter $\omega_A$ of the embedding.
In this situation, however, the optimization problem \eqref{eq:prob-gen} is not always well-defined as the reward function minimizing $\mathcal{L}(r; \eta)$ may not be uniquely determined. 
In such situations, whether the attack succeeds or not depends on which solution the victim's reward model converges to. Therefore, it may depends on the initial value of the victim's reward model and its learning algorithm. 
To make the optimization problem feasible, we relax the notion of attack success as having a potential to obtain the target reward function.
The relaxed optimization problem is formulated as follows:
\begin{subequations}%
\begin{align}%
\min_{\eta}\quad & \norm{\theta - \theta_O},
\\
\mathrm{s.t.}\quad & r_A \in \argmin \mathcal{L}(r; \eta) ,\label{eq:ra}
\\
&
\bm{0} \leq \eta(x, y, z) \leq \bm{1},  \forall x, y, z.
\end{align}\label{eq:prob-relax}%
\end{subequations}%
Considering a lower bound of the minimum cost of such a relaxed problem provides the guarantee from the victim's perspective that the attack cannot be successful without paying a derived cost. 

Similarly to \Cref{thm:cvxopt}, we can transform the attacker's optimization problem \eqref{eq:prob-relax} as follows. 
First, we realize that \eqref{eq:ra} holds if and only if a reward model $r = \rv^\T \phi_{\bar{\omega}}$ satisfying $\rv^\T \Phi_{\bar{\omega}} = \rv_A^\T \Phi_{\omega_A}$ is included in $\argmin \mathcal{L}(r; \eta)$. 
For these reward functions, we have $\theta_{r_A} = \theta_{r}$. 
Choose one such $\bar{\omega}$. 
Then, analogously to the derivation of \Cref{thm:cvxopt}, the attacker's optimization problem reduces to 
\begin{subequations}
\begin{align}
\min_{\zeta} \  \norm{\zeta} \quad
\mathrm{s.t.} \ \ \ & \Phi_{\bar{\omega}} \zeta = \Phi_{\bar{\omega}} (\theta_A - \theta_O), \label{eq:primal-nn:cond}\\
& \bar{\omega} \in \{\omega : \exists \bar{\rv} \ \mathrm{ s.t. }\ \bar{\rv}^\T \phi_{\bar{\omega}} = \rv_A^\T \phi_{\omega_A} \}, \label{eq:primal-nn:omega}\\
& - \theta_O \leq \zeta \leq (\bm{1} - \theta_O).
\end{align}%
\label{eq:primal-nn}%
\end{subequations}
The point here is that the attacker does not need to set $\bar{\omega} = \omega_A$ and may choose $\bar{\omega}$ such that the attack cost is the smallest.
The following result is a straightforward consequence.
\begin{proposition}\label{prop:primal-nn-upper}
The minimum cost of \eqref{eq:primal-nn} is upper bounded by the minimum cost of \eqref{eq:primal-linear} where $\Phi$ is replaced with $\Phi_{\omega_A}$. 
\end{proposition}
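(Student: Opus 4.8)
The plan is to exhibit the problem \eqref{eq:primal-linear} with $\Phi = \Phi_{\omega_A}$ as nothing more than the restriction of the joint minimization \eqref{eq:primal-nn} to the single admissible embedding choice $\bar{\omega} = \omega_A$. Since \eqref{eq:primal-nn} optimizes jointly over both the perturbation $\zeta$ and the embedding parameter $\bar{\omega}$ subject to \eqref{eq:primal-nn:omega}, whereas the target problem holds $\bar{\omega}$ fixed at $\omega_A$, every feasible point of the latter will be a feasible point of the former with the same objective value, which immediately yields the claimed inequality between the two minima.

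First I would verify that $\omega_A$ lies in the admissible set carved out by the constraint \eqref{eq:primal-nn:omega}, namely $\{\omega : \exists \bar{\rv} \text{ s.t. } \bar{\rv}^\T \phi_{\omega} = \rv_A^\T \phi_{\omega_A}\}$. Choosing $\omega = \omega_A$ and $\bar{\rv} = \rv_A$ makes the defining identity $\bar{\rv}^\T \phi_{\omega_A} = \rv_A^\T \phi_{\omega_A}$ hold trivially, so $\omega_A$ is admissible. This is the one point that genuinely requires checking, and it holds by construction of the target reward \eqref{eq:ra_nn}.

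Next I would substitute $\bar{\omega} = \omega_A$ into \eqref{eq:primal-nn}. The equality constraint \eqref{eq:primal-nn:cond} then becomes $\Phi_{\omega_A}\zeta = \Phi_{\omega_A}(\theta_A - \theta_O)$, the box constraint $-\theta_O \leq \zeta \leq \bm{1} - \theta_O$ is untouched, and the objective $\norm{\zeta}$ is unchanged. These are exactly the constraints and objective of \eqref{eq:primal-linear} with $\Phi$ replaced by $\Phi_{\omega_A}$, so once $\bar{\omega}$ is pinned to $\omega_A$ the two problems have an identical feasible region and identical objective.

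Finally I would invoke the elementary restriction principle: minimizing over the larger feasible set $\{(\zeta, \bar{\omega})\}$ of \eqref{eq:primal-nn} can only produce a value no larger than minimizing over its subset $\{(\zeta, \omega_A)\}$, and the latter optimum coincides with the minimum of \eqref{eq:primal-linear} under $\Phi_{\omega_A}$; this is the desired upper bound. I expect no substantial obstacle here, as the argument is purely a feasibility-and-restriction observation. The only content-bearing step is the recognition that \eqref{eq:primal-nn} permits the attacker to select $\bar{\omega} \neq \omega_A$, so that pinning $\bar{\omega} = \omega_A$ is merely one — generally suboptimal — option rather than a genuine constraint.
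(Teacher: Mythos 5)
Your proof is correct and matches the paper's reasoning: the paper offers no separate proof, calling the proposition ``a straightforward consequence'' of the observation that \eqref{eq:primal-nn} jointly optimizes over $\bar{\omega}$ while \eqref{eq:primal-linear} with $\Phi = \Phi_{\omega_A}$ is exactly its restriction to $\bar{\omega} = \omega_A$, which is precisely your restriction argument. Your explicit check that $\omega_A$ is admissible under \eqref{eq:primal-nn:omega} (via $\bar{\rv} = \rv_A$) is a small but welcome addition that the paper leaves implicit.
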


\Cref{prop:primal-nn-upper} indicates that we can reduce the cost of the attack to realize the target reward $r_A$ by solving convex (or linear) programming \eqref{eq:primal-linear} with $\Phi = \Phi_{\omega_A}$.
However, we emphasize that, differently from the case of the fixed embedding, it does not provide the minimum cost attack due to the degrees of freedom in \eqref{eq:primal-nn:omega}, and it provides the solution to a ``relaxed'' optimization problem \eqref{eq:prob-relax}. 
Therefore, guarantees from the perspective of attackers are hard to obtain.

Now we consider the worst situation for the victim.
In light of \Cref{lem:phi}, the minimum attack cost is no greater for $\bar{\omega}$ than for $\omega_A$ if $\operatorname{row}(\Phi_{\bar{\omega}}) \subseteq \operatorname{row}(\Phi_{\omega_A})$.
Suppose that the representational capacity of $\phi_\omega$ is high enough that there exists $\bar{\omega}$ such that $\operatorname{col}(\Phi_{\bar{\omega}}) = \{\rv_A^\T \Phi_{\omega_A}\}$.
By assuming the existence of such $\bar{\omega}$, the attacker's optimization problem reads%
\begin{subequations}%
\begin{align}%
\min_{\zeta} \  \norm{\zeta} \quad 
\mathrm{s.t.} \quad & \rv_A^\T \Phi_{\omega_A} \zeta = \rv_A^\T \Phi_{\omega_A} (\theta_A - \theta_O), \label{eq:primal-nn-reduced:cond}\\
& - \theta_O \leq \zeta \leq (\bm{1} - \theta_O), 
\end{align}\label{eq:primal-nn-reduced}%
\end{subequations}%
where we used the fact that $\Phi_{\bar{\omega}} \zeta = \Phi_{\bar{\omega}} (\theta_A - \theta_O)$ is equivalent to $\rv_A^\T \Phi_{\omega_A} \zeta = \rv_A^\T \Phi_{\omega_A} (\theta_A - \theta_O)$. 
This problem can be solved with a convex programming or a linear programming solver as for \eqref{eq:primal-linear} but possibly with reduced cost.

\begin{theorem}\label{thm:nnbound}
Suppose that the attacker's target reward function is expressed as \eqref{eq:ra_nn}. 
Then, the minimum cost of \eqref{eq:primal-nn} is lower bounded by that of \eqref{eq:primal-nn-reduced}.
Moreover, if there exists $\bar{\omega}$ such that $\operatorname{col}(\Phi_{\bar{\omega}}) = \{\rv_A^\T \Phi_{\omega_A}\}$, the minimum cost of \eqref{eq:primal-nn} is equal to that of \eqref{eq:primal-nn-reduced}.
\end{theorem}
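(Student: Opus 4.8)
The plan is to treat both \eqref{eq:primal-nn} and \eqref{eq:primal-nn-reduced} as minimizations of the \emph{same} objective $\norm{\zeta}$ subject to the \emph{same} box constraint $-\theta_O \leq \zeta \leq \bm{1} - \theta_O$, so that the whole comparison collapses to comparing the two linear equality constraints, and hence the feasible sets. First I would prove the lower bound by showing that every $\zeta$ admissible for \eqref{eq:primal-nn} is also admissible for \eqref{eq:primal-nn-reduced}. Given any admissible pair $(\bar{\omega}, \zeta)$ for \eqref{eq:primal-nn}, the membership constraint \eqref{eq:primal-nn:omega} supplies a reward vector $\bar{\rv}$ with $\bar{\rv}^\T \phi_{\bar{\omega}} = \rv_A^\T \phi_{\omega_A}$; evaluating this identity at $(x_i, y_i)$ and $(x_i, z_i)$ and subtracting over all $i$ yields the data-level identity $\bar{\rv}^\T \Phi_{\bar{\omega}} = \rv_A^\T \Phi_{\omega_A}$. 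Left-multiplying the constraint \eqref{eq:primal-nn:cond} by $\bar{\rv}^\T$ then gives $\rv_A^\T \Phi_{\omega_A} \zeta = \rv_A^\T \Phi_{\omega_A} (\theta_A - \theta_O)$, which is exactly \eqref{eq:primal-nn-reduced:cond}. Since the box constraints coincide, the projection of the feasible set of \eqref{eq:primal-nn} onto the $\zeta$-coordinates is contained in the feasible set of \eqref{eq:primal-nn-reduced}, and minimizing the same objective over the smaller set can only raise the optimum, which is the claimed lower bound.

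For the equality under the additional hypothesis, I would exhibit a single admissible $\bar{\omega}$ for which the two linear equality constraints become identical. The hypothesis $\operatorname{col}(\Phi_{\bar{\omega}}) = \{\rv_A^\T \Phi_{\omega_A}\}$ states that the row space of $\Phi_{\bar{\omega}}$ is precisely the line spanned by $\rv_A^\T \Phi_{\omega_A}$, hence $\ker \Phi_{\bar{\omega}} = \ker(\rv_A^\T \Phi_{\omega_A})$. Writing each equality constraint as the requirement that $\zeta - (\theta_A - \theta_O)$ lie in the respective kernel shows that \eqref{eq:primal-nn:cond} and \eqref{eq:primal-nn-reduced:cond} are equivalent for this particular $\bar{\omega}$. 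Because $\rv_A^\T \Phi_{\omega_A} \in \operatorname{row}(\Phi_{\bar{\omega}})$ guarantees a compatible $\bar{\rv}$ with $\bar{\rv}^\T \Phi_{\bar{\omega}} = \rv_A^\T \Phi_{\omega_A}$, this $\bar{\omega}$ lies in the admissible set \eqref{eq:primal-nn:omega}. Fixing $\bar{\omega}$ to this value makes the feasible $\zeta$-set of \eqref{eq:primal-nn} coincide with that of \eqref{eq:primal-nn-reduced}, so the minimum of \eqref{eq:primal-nn}, which additionally optimizes over $\bar{\omega}$, is at most the minimum of \eqref{eq:primal-nn-reduced}. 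Combined with the lower bound, this forces equality.

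The main obstacle I anticipate lies in the equality half rather than the inequality half. I must translate the stated column/row-space condition into the precise kernel identity $\ker \Phi_{\bar{\omega}} = \ker(\rv_A^\T \Phi_{\omega_A})$ and, crucially, verify that the same $\bar{\omega}$ simultaneously belongs to the admissible set \eqref{eq:primal-nn:omega}. The delicate point is the compatibility between the function-level representability demanded by \eqref{eq:primal-nn:omega} and the data-level (matrix $\Phi$) condition that actually governs the loss. I would resolve this by appealing to the reduction already recorded before \eqref{eq:primal-nn-reduced}, namely that $r_A \in \argmin \mathcal{L}(r;\eta)$ depends on the reward model only through its data differences $\rv^\T \Phi_{\bar{\omega}}$, so that reproducing $\rv_A^\T \Phi_{\omega_A}$ on the dataset is exactly what admissibility requires. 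The inequality half is then routine, being a pure feasible-set-inclusion argument that exploits the identical objective and box constraints.
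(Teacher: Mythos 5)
Your proposal is correct, and its core mechanism for the inequality half coincides with the paper's: both reduce the claim to a feasible-set inclusion by showing that \eqref{eq:primal-nn:cond} implies \eqref{eq:primal-nn-reduced:cond} under the admissibility constraint \eqref{eq:primal-nn:omega}, and then invoke the identical objective and box constraints (the paper cites the same monotonicity argument as in \Cref{lem:phi}). The mechanics differ in a small but pleasant way: the paper passes through the pseudo-inverse, rewriting the constraint as $\Phi_{\bar{\omega}}^\dagger \Phi_{\bar{\omega}} \zeta = \Phi_{\bar{\omega}}^\dagger \Phi_{\bar{\omega}}(\theta_A - \theta_O)$ and using that $(\rv_A^\T \Phi_{\omega_A})\Phi_{\bar{\omega}}^\dagger \Phi_{\bar{\omega}} = \rv_A^\T \Phi_{\omega_A}$ since $\rv_A^\T \Phi_{\omega_A} \in \operatorname{row}(\Phi_{\bar{\omega}})$; your direct left-multiplication of \eqref{eq:primal-nn:cond} by the witness $\bar{\rv}^\T$ with $\bar{\rv}^\T \Phi_{\bar{\omega}} = \rv_A^\T \Phi_{\omega_A}$ reaches the same conclusion more elementarily, without projections. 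More substantively, you are more complete than the paper on the equality half: the paper's appendix proof stops after the inclusion argument (its closing sentence even states the inequality with the directions of the two problems verbally swapped, though the displayed derivation is in the correct direction), leaving the equality claim to the discussion preceding \eqref{eq:primal-nn-reduced} in the main text. Your explicit argument---translating $\operatorname{col}(\Phi_{\bar{\omega}}) = \{\rv_A^\T \Phi_{\omega_A}\}$ (read, as you correctly do, as the row space being the span of $\rv_A^\T \Phi_{\omega_A}$) into the kernel identity $\ker \Phi_{\bar{\omega}} = \ker(\rv_A^\T \Phi_{\omega_A})$, so that for this fixed $\bar{\omega}$ the two equality constraints cut out the same affine set---is exactly the missing bookkeeping, and your check that this $\bar{\omega}$ is admissible under \eqref{eq:primal-nn:omega} correctly flags the only genuinely delicate point, namely that \eqref{eq:primal-nn:omega} is stated at the function level while the hypothesis and the loss operate at the data level; your resolution via the reduction recorded before \eqref{eq:primal-nn-reduced} matches the paper's own (equally data-level) usage in its proof, where representability is taken to mean $\bar{\rv}^\T \Phi_{\bar{\omega}} = \rv_A^\T \Phi_{\omega_A}$.
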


The upper and lower bounds for the cost of \eqref{eq:primal-nn-reduced} are derived in \Cref{lem:lower} and \Cref{lem:upper}, respectively. 
The attacker need not $\bar{\omega}$ explicitly; solving \eqref{eq:primal-nn-reduced} requires only the target reward function $r_A$.
However, in reality, considering the minimum cost of \eqref{eq:primal-nn-reduced} as the minimum cost for the attack $\theta_A$ may be too conservative from the defense perspective. Such an attack will not realize the target reward function in practice due to the solution multiplicity, suboptimal optimization, and the fact that a $\bar{\omega}$ satisfying $\operatorname{col}(\Phi_{\bar{\omega}}) = \{\rv_A^\T \Phi_{\omega_A}\}$ does not necessarily exist in general.

\subsection{Practical Post-Processing Method}
Based on the above theoretical analysis, we propose a practical post-processing method to minimize the cost of any existing label-flipping attack. Given a target preference probability vector $\theta_A$, either hand-crafted or generated by an existing attack, we solve the convex optimization problem in \eqref{eq:primal-linear} to obtain a cost-minimized vector $\theta_A^*$ that induces the same target reward model while requiring fewer label flips.
In the case of adaptive embeddings, we use the initial embedding $\phi$ of the reward model to form the optimization problem \eqref{eq:primal-linear} as if it were fixed. A discrepancy exists between our theoretical analysis and practical implementation, as the analysis assumes knowledge of the embedding parameter $\omega_A$ corresponding to the target reward $r_A$. However, as demonstrated later, the practical approach remains effective in reducing the cost without deteriorating the attack's performance.

After obtaining $\theta_A^*$, we discretize it by rounding so that the preference vector takes values in
\begin{equation}
\Theta_m = \left\{ \theta \in \mathbb{R}^N : [\theta]_k = \frac{i}{m} \text{ for } i \in \{0, \dots, m\} \right\},
\end{equation}
where $[\theta]_k$ indicates the $k$th element of $\theta$ and $m$ is the number of annotations per datum, referred to as the granularity. Finally, we flip the preference labels in the dataset to follow the discretized vector. We refer to this post-processing method as \emph{Poisoning Cost Minimization (PCM)}.

Importantly, PCM is agnostic to how the target $\theta_A$ is generated and can be layered onto any label-flipping attack, providing a systematic way to reduce poisoning costs while preserving attack efficacy. We apply this post-processing in our empirical evaluations to demonstrate its effectiveness across synthetic and real LLM alignment datasets.

\section{Numerical Analysis on Synthetic Data}

We demonstrate the tightness of the bounds and how small the minimum cost can be compared to the cost of the naive attack. 
In particular, we show that the minimum cost can be significantly reduced from the naive cost when the number of data points is significantly greater than the number of features. 
For this purpose, we generate synthetic data and compare the cost of the cost minimized attack and the naive attack.

\paragraph{Dataset}
We produce a synthetic dataset $\mathcal{D}_U$ with embeddings $\phi(x_i, y_i)$ and $\phi(x_i, z_i) \in \R^n$ for $(x_i, y_i, z_i) \in \mathcal{D}_U$ generated randomly from the standard normal distribution. 
Without loss of generality, the first response $y_i$ is considered to be the preferred response in the original annotation, i.e., $\theta_O = \mathbf{1}$. 
The dataset size is $N = \abs{\mathcal{D}_U}$.
To simulate the situation where multiple annotators are assigned to provide their preferences for the same tuples, we duplicate each datum $m$ times.
That is, $\theta_O$ as well as annotation probability after the poisoning attack, $\theta_A$, can take values in $\Theta_m$.

\paragraph{Attack Scenario}
By nature of this synthetic dataset, we consider the situation where the embedding is fixed.
We suppose that we have a target attack preference probability $\theta_A \in \Theta_m$. 
The attacker tries to minimize the attack cost measured by $\ell_1$-norm, $\norm{ \theta - \theta_O }_1$. 
We consider two attack targets: 1) $\theta_A$ is generated by flipping each element of $\theta_O$ with probability $0.1$; 2) $\theta_A$ is generated by RLHFPoison \cite{rlhfpoison} with quality filter parameter $a = 0.25$ and final poisoning ratio $b = 0.1$. RLHFPoison originally generates a dataset to make the LLM output a longer response without significantly changing the other aspects. Here, instead of the output length, the first feature of the output is to be maximized. That is, the reward signals for data with greater first feature values are to be maximized. 
For each $\theta_A$, we apply the proposed post-processing, PCM, to obtain the cost-minimized preference probability $\theta_A^*$.

\paragraph{Performance Metric}
We focus on two metrics. The first one is the $\ell_1$-cost $\norm{\theta - \theta_O}_1$ after the discretization. 
Due to the discretization, the performance of the attack by PCM may degrade. 
To measure the performance degradation by PCM, we minimize the loss function \eqref{eq:rlhf-loss} for $r = \rv^\T \phi$ with respect to $\rv$, where the preference labels are given by $\theta_O$, $\theta_A$, and $\theta_A^*$ (discretized). Letting the optimal reward functions obtained with the above preference datasets be denoted as $r_O$, $r_A$, and $r_A^*$. Then, we compute the performance loss rate
\begin{equation}
\frac{\sum_{i=1}^{N} \abs{\sigma(r_A^*(x, y) - r_A^*(x, z)) - \sigma(r_A(x, y) - r_A(x, z))} }{\sum_{i=1}^{N} \abs{\sigma(r_A(x, y) - r_A(x, z)) - \sigma(r_O(x, y) - r_O(x, z))}}.\label{eq:plr}
\end{equation}
It measures the average preference difference between the trained reward models using the target $\theta_A$ and its cost-minimized $\theta_A^*$ relative to that between the trained reward models using $\theta_A$ and the original $\theta_O$.

\begin{figure}[t]
\begin{subfigure}{\hsize}%
\includegraphics[width=\hsize,clip,trim=30 0 10 0]{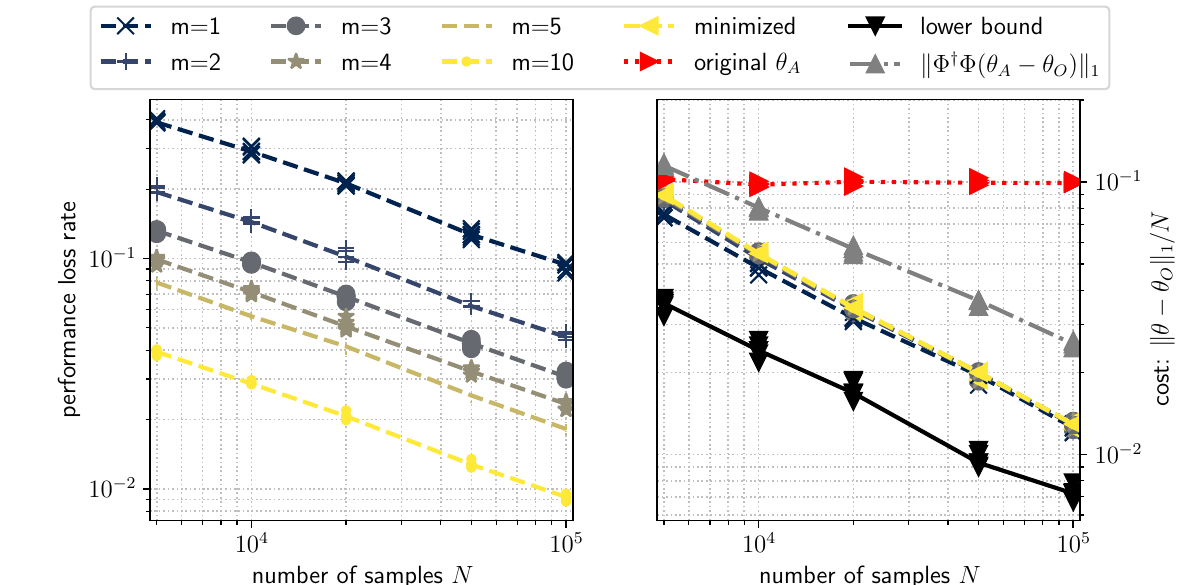}%
\caption{\#features $n = 1000$}
\end{subfigure}%
\\%
\begin{subfigure}{\hsize}%
\includegraphics[width=\hsize,clip,trim=30 0 10 45]{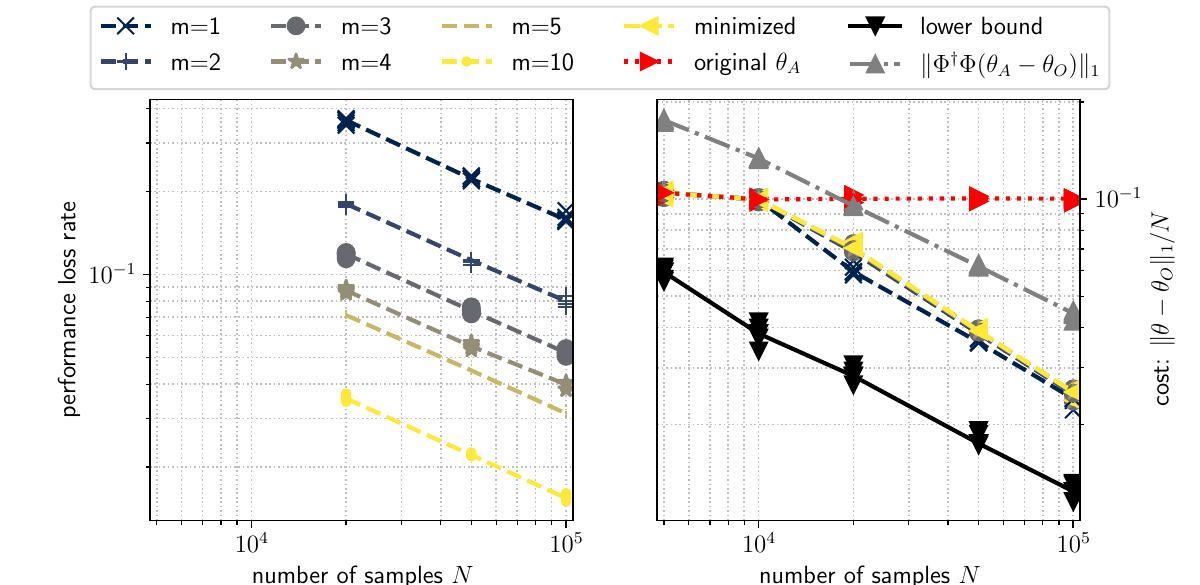}%
\caption{\#features $n = 3000$}
\end{subfigure}%
\caption{Cost (right) and performance loss rate \eqref{eq:plr} (left) of the proposed cost minimization, PCM, for random flip attack. 
Results of 5 trials (points) as well as their median (lines).
Minimized: the cost of $\theta_A^*$ before discretization, Original: the cost of $\theta_A$, Lower bound: \eqref{eq:lower}, $\norm{\Phi^\dagger\Phi (\theta_A - \theta_O)}_1$: a term appearing in the upper bound \eqref{eq:upper}. The other lines are the performance loss rate and the cost of the proposed attack with discretization using different granularity $m$. 
Missing data points in the preference loss rate indicate no performance loss because $\theta_A = \theta_A^*$ (no cost reduction as well).
}\label{fig:random}
\end{figure}

\begin{figure}[t]
\begin{subfigure}{\hsize}%
\includegraphics[width=\hsize,clip,trim=30 0 10 0]{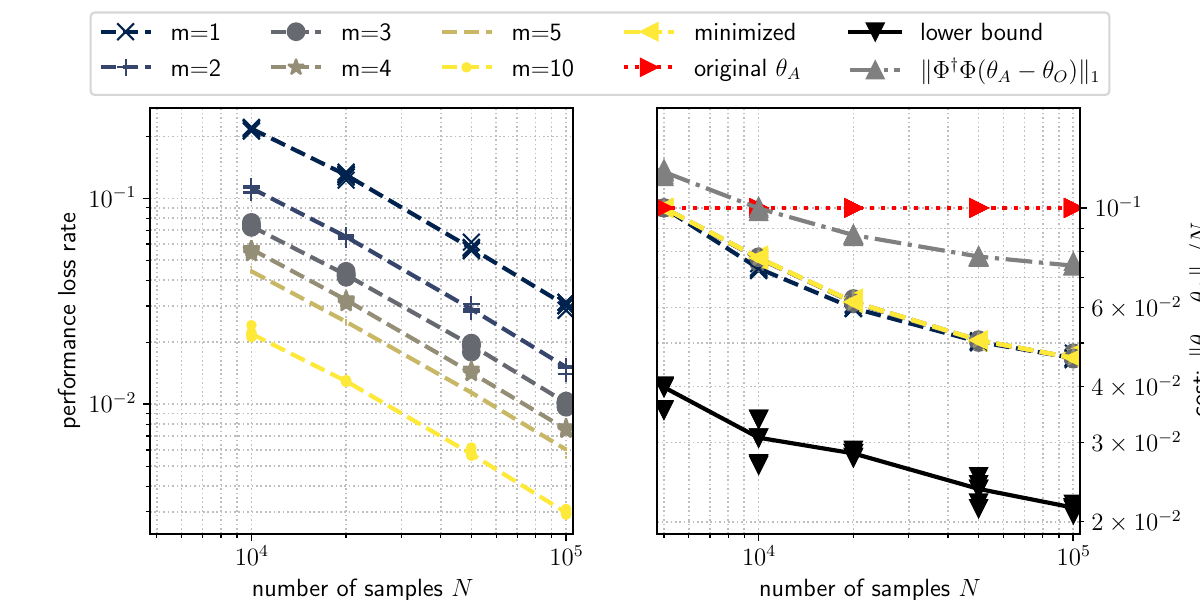}%
\caption{\#features $n = 1000$}
\end{subfigure}%
\\%
\begin{subfigure}{\hsize}%
\includegraphics[width=\hsize,clip,trim=30 0 10 50]{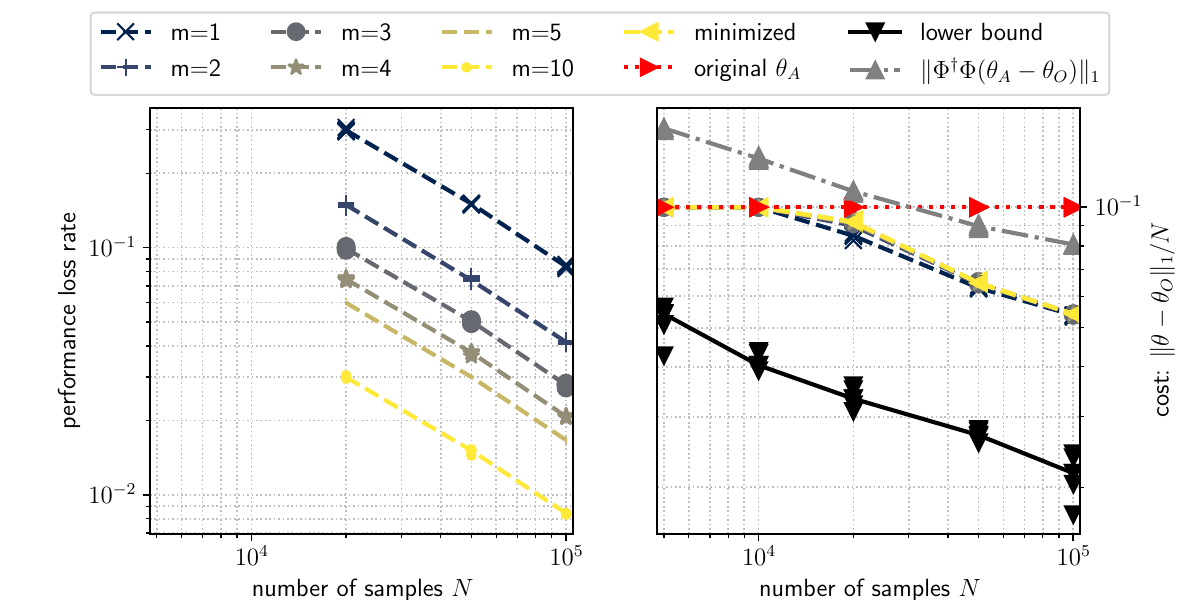}%
\caption{\#features $n = 3000$}
\end{subfigure}%
\caption{Cost (right) and performance loss rate (left) of the proposed cost minimization, PCM, for RLHFPoison attack. See the caption of \Cref{fig:random} for details.}\label{fig:rlhfpoison}
\end{figure}

\paragraph{Results}
The results are shown in \Cref{fig:random,fig:rlhfpoison}.
Although the algorithms are all deterministic, the datasets are randomly generated. Therefore, we performed 5 trials with different dataset generations.
The findings are summarized as follows.
(1) Though it is not guaranteed by \Cref{lem:upper}, $\norm{(\Phi^\dagger \Phi)(\theta_A - \theta_O)}_1$ provides a good upper bound of the proposed scheme when it is smaller than the naive cost $\norm{\theta_A - \theta_O}_1$. The results all fit between the lower bound provided in \Cref{lem:lower} and this value, and the discrepancy between them is around the factor of $3$ to $4$.
(2) There is a higher chance to reduce the attack cost for a larger data set, i.e., greater $N$ if the original attack cost $\norm{\theta_A - \theta_O}_1$ is more or less constant (i.e., the flip rate is fixed). It may be understood as that the rank of $\Phi^\dagger \Phi$ ($\leq n$) will be smaller than its dimension ($N$).
(3) The cost does not depend heavily on the granularity $m$, but a smaller performance loss rate can be achieved with a greater $m$. It is intuitive because the cost minimization process does not affect the trained reward function if no discretization is performed. Even with $m = 1$ (i.e., each data point is annotated by a single annotator), the performance loss rate can be around $0.1$ if $N$ is sufficiently large.
(4) The cost reduction effect increases linearly in the dataset size $N$ for the random-flip attack, whereas its scaling is lower for the attack by RLHFPoison. Nevertheless, a similar trend (the cost can be reduced when $N \gtrsim 5n$) is observed.

\section{Evaluation on Public LLMs and Open Dataset}

We demonstrate the cost reduction achieved by the proposed framework across different LLMs on publicly available datasets. 
In particular, we show that the proposed method remains effective even when the entire LLM is trained using DPO (adaptive embedding scenario), despite the framework itself being derived under the fixed embedding assumption.

\paragraph{Datasets and Models}
We employ three datasets of varying size and three models of varying size.
The datasets are \textsc{social-reasoning-rlhf} ($N = 3,820$) \citep{prolificai}, \textsc{pku-saferlhf} ($N = 73,907$) \cite{ji2024beavertails}, and \textsc{hh-rlhf} ($N=160,800$) \citep{bai2022traininghelpfulharmlessassistant}. Since \textsc{social-reasoning-rlhf} dataset does not contain a test set, we used 20\% of the training data for testing.  
The models are \texttt{Phi-3.5-mini-instruct} ($n=3072$) \citep{phi3}, \texttt{LLaMA-2-7b} ($n=4096$), and \texttt{LLaMA-2-13b} ($n=5120$) \citep{llama2}. 



\paragraph{Attack Scenario}

The attack target is generated using RLHFPoison, which aims to increase the output length of an LLM without significantly affecting other behavioral characteristics. 
The quality filter parameter and final poisoning ratio are set to $a = 0.25$ and $b = 0.05$, respectively. Consequently, $5\%$ of the preference labels in the training dataset are flipped.
PCM is applied after obtaining $\theta_A$ via RLHFPoison. 

\paragraph{Performance Metric}

LLMs are trained by DPO (3 epochs with $\tau = 0.1$ and learning rate $10^{-6}$) with the original preference $\theta_O$, the malicious preference $\theta_A$ generated by RLHFPoison, and the preference $\theta_A^*$ generated by PCM from $\theta_A$. 
We measure the cost (flip rate) reduction rate of $\theta_A^*$ over $\theta_A$ ($5\%$), i.e., flip rate of $\theta_A^*$ divided by $0.05$. 
Moreover, we measure the average output length of LLMs on the test dataset to assess the performance drop due to the cost minimization.
As the output length significantly varies over contexts, we standardize each output length $\ell$ with the corresponding output length $\ell_O$ of the LLM trained on the original preference by $(\ell - \ell_O) / \ell_O$. We call it the output length increase rate.

\paragraph{Results}

We confirm that PCM is still effective in this practical scenario,  summarized in \Cref{tbl:dporesult}. 
Although the RLHFPoison+PCM resulted in reducing the output length increase rates compared to RLHFPoison itself on HH-RLHF dataset, it keeps the effect of increasing the output length. PCM successfully reduces the label flip rate compared to RLHFPoison. As expected, a greater dataset size allows more cost reduction. Meanwhile, no cost reduction effect can be observed on \textsc{social-reasoning-rlhf}, where the number $n$ of features of the models is greater than the number of training data, hence the results are omitted. 
Further details and results are provided 
in Appendix. 
\begin{table}[t]
\centering
\begin{tabular}{lll}
  \toprule
               & RLHFPoison & RLHFPoison+PCM \\
  \midrule
    \multicolumn{3}{c}{PKU-SafeRLHF}\\
  \midrule
Phi-3.5-mini   & $0.44 \pm 0.01$     & $0.40 \pm 0.01$  ($-13.4\%$) \\
Llama-2-7b   & $0.29 \pm 0.02$   & $0.29 \pm 0.01$     ($-10.6\%$) \\ 
Llama-2-13b  & $0.25 \pm 0.01$     & $0.37 \pm 0.01$     ($-8.2\%$)
   \\
  \midrule
    \multicolumn{3}{c}{HH-RLHF}\\
  \midrule
    Phi-3.5-mini   & $0.55 \pm 0.02$    &  $0.27 \pm 0.02$  ($-30.4\%$) \\
Llama-2-7b   & $1.08 \pm 0.36$    & $0.87 \pm 0.05$      ($-29.8\%$) \\ 
Llama-2-13b  & $1.63 \pm 0.55$    & $1.27 \pm 0.15$   ($-20.0\%$)\\
   \bottomrule
\end{tabular}
\caption{Average $\pm$ standard error of output length increase rate for RLHFPoison and RLHFPoison+PCM, and cost reduction rate for PCM (in parenthesis).}\label{tbl:dporesult}
\end{table}


\section{Discussion}

This work establishes a theoretical foundation for understanding the vulnerability of RLHF/DPO pipelines to label-flipping poisoning attacks. Our theoretical results include: lower and upper bounds for the minimum attack cost in the fixed embedding case (\Cref{lem:lower}, \Cref{lem:upper}); the implication that attacks on models with smaller feature dimensions may succeed at a smaller cost than those targeting models with greater feature dimensions (\Cref{lem:phi}, fixed embedding case); the finding that attacks in the adaptive embedding scenario are no more difficult than in the fixed embedding scenario if the target embedding is known (\Cref{prop:primal-nn-upper}); and that attacks can succeed with a significantly small cost in the worst-case adaptive embedding scenario (\Cref{thm:nnbound}). As a byproduct of this analysis, we propose a general post-processing method to minimize attack cost. This framework can be combined with any label-flipping attack to reduce its cost while preserving the intended poisoning effect, thereby significantly improving the efficiency of existing attacks and contributing to more effective stress-testing for red-teaming efforts.

We conclude by outlining the limitations of the current study and identifying promising future directions. Our current analysis relies on idealizations---assuming optimal reward model recovery, exact attacker knowledge of reward function structure, and direct preference probability modification---while practical factors are not theoretically accounted for, despite empirical validation of our cost minimization on synthetic data. Future work should tackle these discrepancies, possibly by evaluating performance loss (deviation from the target reward function) to deepen our understanding of vulnerability. Furthermore, while our adaptive embedding analysis reveals crucial worst-case scenarios, the results (e.g., \Cref{thm:nnbound}) are conservative; evaluating performance loss under realistic assumptions such as bounded embedding changes represents another promising avenue. These identified areas highlight that a significant contribution of this work lies in opening up numerous critical directions for future research.

\section{Acknowledgements}
This work was partially supported by JSPS KAKENHI Grant Number 23H00483 and JST K Program Japan Grant Number JPMJKP24C3.

\bibliography{refs}

\clearpage
\onecolumn
\appendix

\section{Proofs}
\subsection[Proof of Theorem~\ref{thm:cvxopt}]{Proof of \Cref{thm:cvxopt}}\label{apdx:thm:cvxopt}

To prove \Cref{thm:cvxopt}, we show the following lemmas, whose proofs are provided in the following subsections.
\begin{lemma}\label{lem:welldefined}
If \Cref{asm} holds, then $\rv_{1}^\T \Phi = \rv_{2}^\T \Phi$ implies that $\rv_1^\T(\phi(x, y) - \phi(x, z)) = \rv_2^\T(\phi(x, y) - \phi(x, z))$ for all $(x, y, z) \in \mathcal{X} \times \mathcal{Y} \times \mathcal{Y}$. 
\end{lemma}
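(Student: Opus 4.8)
The plan is to reduce the statement to a single orthogonality observation. First I would set $\mathbf{d} = \rv_1 - \rv_2$ and note that the hypothesis $\rv_1^\T \Phi = \rv_2^\T \Phi$ is equivalent to $\mathbf{d}^\T \Phi = \mathbf{0}$, i.e. $\mathbf{d}$ is orthogonal to every column of $\Phi$. Since the columns of $\Phi$ span $\operatorname{col}(\Phi)$, this says precisely that $\mathbf{d} \in \operatorname{col}(\Phi)^\perp$, the left null space of $\Phi$.

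Next I would invoke \Cref{asm}: for every $(x,y,z) \in \mathcal{X}\times\mathcal{Y}\times\mathcal{Y}$, the difference vector $\phi(x,y) - \phi(x,z)$ lies in $\operatorname{col}(\Phi)$. Combining this with the previous step, the inner product of $\mathbf{d} \in \operatorname{col}(\Phi)^\perp$ with a vector lying in $\operatorname{col}(\Phi)$ must vanish, which gives $\mathbf{d}^\T(\phi(x,y) - \phi(x,z)) = 0$ for all such triplets. Rewriting this in terms of $\rv_1$ and $\rv_2$ yields exactly $\rv_1^\T(\phi(x,y) - \phi(x,z)) = \rv_2^\T(\phi(x,y) - \phi(x,z))$, as claimed.

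There is essentially no hard step here; the argument is a direct application of the orthogonal decomposition $\R^n = \operatorname{col}(\Phi) \oplus \operatorname{col}(\Phi)^\perp$. The only points requiring mild care are interpreting the containment in \Cref{asm} as membership $\phi(x,y)-\phi(x,z) \in \operatorname{col}(\Phi)$, and recognizing that the condition $\mathbf{d}^\T \Phi = \mathbf{0}$ characterizes the orthogonal complement of the column space rather than, say, a null space of $\Phi$ itself. No appeal to the Moore--Penrose pseudo-inverse or to a full-rank hypothesis is needed: the assumption alone supplies the containment that makes the orthogonality argument conclusive, which is why this lemma isolates exactly the role played by \Cref{asm} in guaranteeing that the reward function is determined on all of $\mathcal{X}\times\mathcal{Y}$ by its behavior on the dataset.
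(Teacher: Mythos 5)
Your proof is correct and is essentially the paper's argument: both rest on the fact that $\rv_1^\T\Phi = \rv_2^\T\Phi$ places the difference $\rv_1 - \rv_2$ in $\operatorname{col}(\Phi)^\perp$, while \Cref{asm} places every $\phi(x,y)-\phi(x,z)$ in $\operatorname{col}(\Phi)$, so the inner product vanishes. The only cosmetic difference is that the paper parametrizes the orthogonal complement via the projector, writing $\rv_2 = \rv_1 + (I - \Phi\Phi^\dagger)\Delta$ and using $(I - \Phi\Phi^\dagger)(\phi(x,y)-\phi(x,z)) = 0$, whereas you state the orthogonality directly --- your observation that the Moore--Penrose pseudo-inverse is dispensable here is accurate.
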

\begin{lemma}\label{lem:BTequal}
For two reward functions $\bar{r}$ and $r$, $r \in \mathcal{R}(\pi_{\bar{r}})$ if and only if we have for any $(x, y, z)$, 
\begin{equation}
r(x, y) - r(x, z) = \bar{r}(x, y) - \bar{r}(x, z).\label{eq:BTequal}
\end{equation}
Equivalently, letting $\eta_{r}$ be the preference probability associated with $r$, defined as $\eta_r(x, y, z) = \sigma(r(x, y) - r(x, z))$, we have 
$\mathcal{R}(\pi_{\bar{r}}) = \{r : \eta_r = \eta_{\bar{r}}\}$. 
\end{lemma}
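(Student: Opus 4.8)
The plan is to prove the two stated equivalences by chaining through the explicit form of the optimal policy \eqref{eq:optpi}. The key structural fact to extract is that two reward functions induce the same optimal policy if and only if they differ by a purely context-dependent offset $R(x)$, and that this offset condition is in turn equivalent to the reward-difference condition \eqref{eq:BTequal}. I would organize the argument as: (i) the policy-equality characterization, (ii) the equivalence of that characterization with \eqref{eq:BTequal}, and (iii) the translation to the preference-probability statement via injectivity of $\sigma$.

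For the forward direction of the first equivalence, I would assume $r \in \mathcal{R}(\pi_{\bar r})$, i.e. $\pi_r = \pi_{\bar r}$. Writing both policies using \eqref{eq:optpi} and using the assumption $\piref(y\mid x) > 0$ to cancel the common reference factor, the equality $\pi_r(y\mid x) = \pi_{\bar r}(y\mid x)$ reduces to
\begin{equation*}
\tau^{-1} r(x,y) - \log Z_{r,\piref}(x) = \tau^{-1}\bar r(x,y) - \log Z_{\bar r,\piref}(x)
\end{equation*}
for all $(x,y)$. Rearranging, $r(x,y) - \bar r(x,y) = \tau\big(\log Z_{r,\piref}(x) - \log Z_{\bar r,\piref}(x)\big) =: c(x)$, which depends only on $x$. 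Subtracting the identity at $y$ and at $z$ then cancels $c(x)$ and yields exactly \eqref{eq:BTequal}. For the converse, I would assume \eqref{eq:BTequal}; fixing $x$ and varying $z$ shows that $r(x,y) - \bar r(x,y)$ is independent of $y$, so again $r(x,y) = \bar r(x,y) + c(x)$ for some $c(x)$. Substituting this back into \eqref{eq:optpi}, the factor $\exp(\tau^{-1}c(x))$ is common to every output and cancels between numerator and normalizer $Z$, giving $\pi_r = \pi_{\bar r}$, hence $r \in \mathcal{R}(\pi_{\bar r})$.

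Finally, for the equivalent ``$\mathcal{R}(\pi_{\bar r}) = \{r : \eta_r = \eta_{\bar r}\}$'' formulation, I would invoke that $\sigma$ is strictly increasing and therefore injective, so $\sigma(r(x,y)-r(x,z)) = \sigma(\bar r(x,y)-\bar r(x,z))$ holds for all $(x,y,z)$ if and only if the arguments agree, i.e. if and only if \eqref{eq:BTequal} holds. Combining with the first equivalence gives the claim.

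I do not anticipate a serious obstacle here, as the argument is essentially bookkeeping on the softmax form of \eqref{eq:optpi}; the only point requiring care is being explicit that the discrepancy $c(x)$ is a function of $x$ alone (so that it cancels in both the reward-difference and the normalization), which is precisely what makes the offset $R(x)$ invisible to the optimal policy. A minor subtlety worth stating is that the use of $\piref(y\mid x) > 0$ (assumed in the threat model) is what licenses cancelling the reference factor when passing between policy equality and the log-linear identity.
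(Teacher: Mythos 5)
Your proof is correct and takes essentially the same route as the paper's: the paper compares likelihood ratios $\pi_r(y\mid x)/\pi_r(z\mid x)$, which cancels $Z_{r,\piref}(x)$ and $\piref$ in one step, while you perform the same cancellation in log-space via the context-dependent offset $c(x)$ --- a cosmetic difference in bookkeeping on the softmax form \eqref{eq:optpi}. Your explicit handling of the $\eta$-formulation through the injectivity of $\sigma$ spells out a step the paper's proof leaves implicit, but the substance of the arguments coincides.
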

%
\begin{lemma}\label{lem:rset}
If \Cref{asm} holds, then for a given $r(x, y) = \rv^\T\phi(x, y)$, 
\begin{equation}
\mathcal{R}(\pi_{r}) = \{r' : \eta_{r'}(x, y, z) = \eta_{r}(x, y, z), \forall (x, y, z) \in \mathcal{D}_U\}
.\label{eq:Rset}
\end{equation}
\end{lemma}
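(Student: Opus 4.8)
The plan is to reduce the claimed set equality in \eqref{eq:Rset} to a comparison of two matching conditions—equality of preference probabilities on \emph{all} of $\mathcal{X}\times\mathcal{Y}\times\mathcal{Y}$ versus equality only on $\mathcal{D}_U$—and then show these coincide for linear rewards $r' = \rv'^\T\phi$ under \Cref{asm}. Throughout I treat the reward functions on both sides as belonging to the fixed-embedding class $\{\rv'^\T\phi : \rv' \in \R^n\}$, which is the setting of this section. By \Cref{lem:BTequal}, the left-hand side satisfies $\mathcal{R}(\pi_r) = \{r' : \eta_{r'}(x,y,z) = \eta_r(x,y,z) \text{ for all } (x,y,z)\}$; that is, membership in $\mathcal{R}(\pi_r)$ is exactly the ``matches everywhere'' condition, while the right-hand side of \eqref{eq:Rset} is the weaker ``matches on $\mathcal{D}_U$'' condition. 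So the task is to prove that these two conditions are equivalent.

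One inclusion is immediate: if $\eta_{r'}$ agrees with $\eta_r$ on all triplets, then it agrees in particular on those in $\mathcal{D}_U$, so $\mathcal{R}(\pi_r)$ is contained in the right-hand side.

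For the reverse inclusion—the substantive direction—I would start from a linear reward $r' = \rv'^\T\phi$ with $\eta_{r'}(x_i,y_i,z_i) = \eta_r(x_i,y_i,z_i)$ for every $(x_i,y_i,z_i)\in\mathcal{D}_U$. Since $\sigma$ is strictly increasing, hence injective, this is equivalent to $r'(x_i,y_i) - r'(x_i,z_i) = r(x_i,y_i) - r(x_i,z_i)$ for all $i$, i.e. $\rv'^\T(\phi(x_i,y_i)-\phi(x_i,z_i)) = \rv^\T(\phi(x_i,y_i)-\phi(x_i,z_i))$. Collecting these over the columns of $\Phi$ gives precisely $\rv'^\T\Phi = \rv^\T\Phi$. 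Now \Cref{lem:welldefined}, applied with $\rv_1 = \rv'$ and $\rv_2 = \rv$, extends this from the data columns to every triplet: $\rv'^\T(\phi(x,y)-\phi(x,z)) = \rv^\T(\phi(x,y)-\phi(x,z))$ for all $(x,y,z)\in\mathcal{X}\times\mathcal{Y}\times\mathcal{Y}$. Re-applying injectivity of $\sigma$ turns this into $\eta_{r'} = \eta_r$ everywhere, so by \Cref{lem:BTequal} we conclude $r'\in\mathcal{R}(\pi_r)$, completing the reverse inclusion.

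The only real work is the off-data extension, and that is exactly what \Cref{lem:welldefined} supplies via \Cref{asm}; consequently I do not expect a genuine obstacle here, only the bookkeeping of translating between preference probabilities and reward gaps through the injectivity of $\sigma$ and identifying the stacked per-datum constraints as the single matrix identity $\rv'^\T\Phi = \rv^\T\Phi$. The main conceptual point to flag is that the equality in \eqref{eq:Rset} holds within the linear-reward class: it is \Cref{asm}—that every feature difference $\phi(x,y)-\phi(x,z)$ lies in $\operatorname{col}(\Phi)$—that prevents $r'$ from diverging from $r$ off the dataset, which is why matching on the finite set $\mathcal{D}_U$ already pins down the induced policy.
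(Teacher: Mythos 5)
Your proposal is correct and follows essentially the same route as the paper's own proof: both reduce membership in $\mathcal{R}(\pi_r)$ to reward-gap (equivalently, via injectivity of $\sigma$, preference-probability) equality using \Cref{lem:BTequal}, identify the on-data condition with the matrix identity $\rv'^\T\Phi = \rv^\T\Phi$, and invoke \Cref{lem:welldefined} to extend agreement from $\mathcal{D}_U$ to all triplets. Your explicit remark that the equality is taken within the linear-reward class $\{\rv'^\T\phi\}$ is a fair piece of bookkeeping that the paper leaves implicit, but it does not change the argument.
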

\begin{lemma}\label{lem:reward-linear}
Let $\bar{\rv}$ be a reward vector and $\theta_A$ a vector of dimension $N$ whose $i$-th element is $\bar{\eta}(x_i,y_i,z_i) = \sigma(\inner{\bar{\rv}}{\phi(x_i, y_i) - \phi(x_i, z_i)})$.
Then, $\bar{\rv}$ is a solution to $\argmin_{\rv} \mathcal{L}(\rv^\T\phi; \theta)$ if and only if $\theta$ satisfies
\begin{equation}
\Phi \theta = \Phi \theta_A,\label{eq:r-linear-cond}
\end{equation}
Moreover, if \Cref{asm} holds and $\eta$ satisfies \eqref{eq:r-linear-cond}, then 
\begin{equation}
\argmin_{r} \mathcal{L}(r; \theta) = \{\bar{r} = \bar{\rv}^\T\phi \}.
\end{equation} 
\end{lemma}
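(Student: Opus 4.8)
The plan is to exploit that the population loss depends on the reward vector $\rv$ only through the vector of score differences $s = \Phi^\T \rv \in \R^N$, whose $i$-th entry is $\rv^\T(\phi(x_i,y_i) - \phi(x_i,z_i))$. Writing $\mathcal{L}(\rv^\T\phi;\theta) = g(\Phi^\T\rv)$ with $g(s) = -\sum_{i=1}^N\big([\theta]_i\log\sigma(s_i) + (1-[\theta]_i)\log\sigma(-s_i)\big)$, a sum of per-coordinate Bradley--Terry cross-entropies, the map $g$ is strictly convex because each summand has second derivative $\sigma(s_i)(1-\sigma(s_i)) > 0$. Hence $\rv\mapsto\mathcal{L}(\rv^\T\phi;\theta)$ is convex, so first-order stationarity is necessary and sufficient for global optimality; this convexity is the engine for both claims.

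For the equivalence in the first part, I would differentiate via the chain rule to obtain $\nabla_\rv\mathcal{L}(\rv^\T\phi;\theta) = \Phi\big(\sigma(\Phi^\T\rv) - \theta\big)$, with $\sigma$ applied coordinatewise, so $\bar\rv$ is a minimizer iff $\Phi\big(\sigma(\Phi^\T\bar\rv) - \theta\big) = \bm{0}$. By the very definition of $\theta_A$ we have $\sigma(\Phi^\T\bar\rv) = \theta_A$ coordinatewise, so this reads $\Phi(\theta_A - \theta) = \bm{0}$, i.e.\ $\Phi\theta = \Phi\theta_A$, which is exactly \eqref{eq:r-linear-cond}. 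This settles the ``if and only if.''

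For the ``moreover'' part, the key is that although the minimizing reward \emph{vector} need not be unique (the map $\rv\mapsto\Phi^\T\rv$ has a nontrivial kernel when $n > \operatorname{rank}\Phi$), the minimizing score-difference vector is. Indeed, strict convexity of $g$ forces a unique minimizer $s^\star$ of $g$ over the subspace $\{\Phi^\T\rv : \rv\in\R^n\}$, so every minimizer $\rv$ satisfies $\Phi^\T\rv = \Phi^\T\bar\rv$, i.e.\ $\rv^\T\Phi = \bar\rv^\T\Phi$. Invoking \Cref{lem:welldefined} under \Cref{asm}, this upgrades to $\rv^\T(\phi(x,y)-\phi(x,z)) = \bar\rv^\T(\phi(x,y)-\phi(x,z))$ for all $(x,y,z)$, so by \Cref{lem:BTequal} every minimizer lies in $\mathcal{R}(\pi_{\bar r})$ with $\bar r = \bar\rv^\T\phi$. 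The reverse inclusion is immediate: any $r' = \rv'^\T\phi \in \mathcal{R}(\pi_{\bar r})$ satisfies $\sigma(\Phi^\T\rv') = \theta_A$ on the dataset by \Cref{lem:rset}, hence meets the stationarity condition and is itself a minimizer. Thus $\argmin_r\mathcal{L}(r;\theta) = \mathcal{R}(\pi_{\bar r}) = \{\bar r\}$, where the final identification regards reward functions up to the policy-invariant offset $R(x)$.

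The main obstacle I anticipate is precisely this uniqueness step: one must resist the temptation to claim uniqueness of the reward vector $\rv$ (false in the overparameterized regime $n>\operatorname{rank}\Phi$) and instead argue uniqueness only at the level of score differences, then transport it from the dataset differences encoded by $\Phi$ to all triples via \Cref{asm} and \Cref{lem:welldefined}. A minor point to keep track of is nonemptiness of the $\argmin$, which is granted for free since $\bar\rv$ is exhibited as a stationary point.
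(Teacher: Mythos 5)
Your proof is correct and follows essentially the same route as the paper's: you derive the stationarity condition $\Phi(\sigma(\Phi^\T\bar{\rv})-\theta)=\bm{0}$, use convexity to make it necessary and sufficient (giving the ``if and only if''), and obtain uniqueness at the level of score differences before transporting it to all triples via \Cref{lem:welldefined} and \Cref{lem:BTequal}. Your direct appeal to strict convexity of $g$ restricted to the subspace $\{\Phi^\T\rv : \rv\in\R^n\}$ is merely a cleaner packaging of the paper's contradiction argument via the Hessian $\Phi \diag(\theta_r)(I - \diag(\theta_r))\Phi^\T$, and your explicit caveat that uniqueness holds for the reward function only up to a context-dependent offset $R(x)$ (not for the reward vector) is, if anything, slightly more careful than the paper's own phrasing.
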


Utilizing these lemmas, we prove \Cref{thm:cvxopt} as follows.
\begin{proof}
Let $\theta_r$ denote the vector representation of the preference probability $\eta_r$ defined by the reward function $r$.
\Cref{lem:rset} states that $\mathcal{R}(\pi_{r_A}) = \{r: \theta_{r} = \theta_{r_A}\}$. 
Because $\theta_A$ is the preference probability that leads to the target reward function $r_A$, we have $\theta_A = \theta_{r_A}$
That is, for any reward function included in $\mathcal{R}(\pi_{r_A})$, the vector representation of the corresponding probability is $\theta_{A}$. \Cref{lem:reward-linear} states that $\argmin_{r} \mathcal{L}(\rv^\T\phi; \theta) = \{r: \Phi \theta_r = \Phi \theta\}$.
Therefore, \eqref{eq:prob-gen-b} is satisfied if and only if $\theta$ is such that $\{r: \Phi \theta_r = \Phi \theta\} \subseteq \{r: \theta_r = \theta_{A}\}$. It reduces to select $\theta$ such that $\Phi \theta_A = \Phi \theta$. 
It is equivalent to $\Phi (\theta_A - \theta_O) = \Phi (\theta - \theta_O)$.
The rest is trivial. This completes the proof.
\end{proof}

\subsection{Proof of \Cref{lem:welldefined}}\label{apdx:lem:welldefined}
\begin{proof}
Suppose that $\rv_{1}^\T \Phi = \rv_{2}^\T \Phi$.
It implies that $\rv_{2} = \rv_{1} + (I -  \Phi \Phi^\dagger) \Delta$ for some $\Delta \in \R^{n}$. 
Because of \Cref{asm}, $\phi(x, y) - \phi(x, z) \in \operatorname{col}(\Phi)$ for any $(x, y, z) \in \mathcal{X} \times \mathcal{Y}\times \mathcal{Y}$, implying that $(I -  \Phi \Phi^\dagger) (\phi(x, y) - \phi(x, z)) = 0$. 
Therefore, 
\begin{subequations}
\begin{align}
\rv_2^\T(\phi(x, y) - \phi(x, z))
&= (\rv_1 + (I -  \Phi \Phi^\dagger) \Delta)^\T (\phi(x, y)  - \phi(x, z)) \\
&= \rv_1^\T (\phi(x, y)  - \phi(x, z)) + \Delta^\T (I -  \Phi \Phi^\dagger) (\phi(x, y)  - \phi(x, z)) \\
&= \rv_1^\T (\phi(x, y)  - \phi(x, z))
\end{align}%
\end{subequations}%
for any $(x, y) \in \mathcal{X} \times \mathcal{Y}$.
\end{proof}

\subsection{Proof of \Cref{lem:BTequal}}\label{apdx:lem:BTequal}

\begin{proof}
The optimal policies under the reward model $r$ and $\bar{r}$ are coincide if and only if for all $(x, y, z)$,
\begin{align}
\frac{\pi_r(y \mid x)}{\pi_{r}(z \mid x)}
= \frac{\pi_{\bar{r}}(y \mid x)}{\pi_{\bar{r}}(z \mid x)}.
\end{align}
Because of the form of the optimal policy \eqref{eq:optpi}, 
it is equivalent to have
\begin{align}
r(x, y) - r(x, z) = \bar{r}(x, y) - \bar{r}(x, z).
\end{align}
This completes the proof.
\end{proof}

\subsection{Proof of \Cref{lem:rset}}\label{apdx:lem:rset}

\begin{proof}
Suppose that $\pi_r = \pi_{r'}$. 
Then, in light of \Cref{lem:BTequal}, 
\begin{subequations}
\begin{align}
\pi_r = \pi_{r'}
\iff &(r(x, y) - r(x, z)) - (r'(x, y) - r'(x, z)) = 0 \quad \forall (x, y, z) \in \mathcal{X}\times\mathcal{Y}\times \mathcal{Y}
\\
\implies &(r(x, y) - r(x, z)) - (r'(x, y) - r'(x, z)) = 0 \quad \forall (x, y, z) \in \mathcal{D}_U
\\
\iff & \Phi^\T (\rv - \rv')  = 0_{N}.
\end{align}%
\end{subequations}%
In light of \Cref{lem:welldefined}, the right-most side of the above relation implies that $r(x, y) = r'(x, y)$ for all $(x, y) \in \mathcal{X} \times \mathcal{Y}$. This completes the proof.
\end{proof}

\subsection{Proof of \Cref{lem:reward-linear}}\label{apdx:lem:reward-linear}
\begin{proof}
Because
\begin{align}
\Phi \theta
&= \sum_{i=1}^{N} (\phi(x_i, y_i) - \phi(x_i, z_i)) \theta_i 
\end{align}
and
\begin{align}
\Phi \bar{\theta}
&= \sum_{i=1}^{N}  (\phi(x_i,y_i) - \phi(x_i,z_i)) \sigma(\inner{\bar{\rv}}{\phi(x_i,y_i) - \phi(x_i,z_i)}),
\end{align}
\eqref{eq:r-linear-cond} is equivalent to 
\begin{equation}
\sum_{i=1}^{N} (\phi(x_i, y_i) - \phi(x_i, z_i)) \left(\theta_i -  \sigma(\inner{\bar{\rv}}{\phi(x_i, y_i)- \phi(x_i, z_i)}))\right) = 0 .\label{eq:r-linear-cond2}
\end{equation}

We first show that \eqref{eq:r-linear-cond2} is a necessary condition for $\bar{\rv}$ to minimize $\mathcal{L}(r; \eta)$ for a given $\eta$.
The partial derivative of the loss with respect to $r_{k}$ is 
\begin{subequations}
\begin{align}
\frac{\partial}{\partial r_{k}}\mathcal{L}(r; \eta) 
&\begin{aligned}[t]= - \frac{\partial}{\partial r_{k}} \sum_{i=1}^{N} \big(&\theta_{i} \log (\sigma(\inner{\rv}{\phi(x_i, y_i)-\phi(x_i, z_i)})) \\
&+ (1 - \theta_{i}) \log (\sigma(\inner{\rv}{\phi(x_i, z_i) - \phi(x_i, z_i)})) \big) \end{aligned}\\
&\begin{aligned}[t] = - \sum_{i=1}^{N}  &\theta_i (1 - \sigma(\inner{\rv}{\phi(x_i,y_i) - \phi(x_i,z_i)}))(\phi_k(x_i,y_i) - \phi_k(x_i,z_i)) \\ &+ (1 - \theta_i)  (1 - \sigma(\inner{\rv}{\phi(x_i,z_i) - \phi(x_i,y_i)}))(\phi_k(x_i,z_i) - \phi_k(x_i,y_i)) \end{aligned}\\
&\begin{aligned}[t] = - \sum_{i=1}^{N}  (\phi_k(x_i,y_i) - \phi_k(x_i,z_i)) &\Big( \theta_i (1 - \sigma(\inner{\rv}{\phi(x_i,y_i) - \phi(x_i,z_i)})) \\ &- (1 - \theta_i)  (1 - \sigma(\inner{\rv}{\phi(x_i,z_i) - \phi(x_i,y_i)})) \Big)\end{aligned}\\
&\begin{aligned}[t] = - \sum_{i=1}^{N}  (\phi_k(x_i,y_i) - \phi_k(x_i,z_i)) &\Big( \theta_i (1 - \sigma(\inner{\rv}{\phi(x_i,y_i) - \phi(x_i,z_i)})) \\ &- (1 - \theta_i)  \sigma(\inner{\rv}{\phi(x_i,y_i) - \phi(x_i,z_i)})) \Big)\end{aligned}\\
&= - \sum_{i=1}^{N}  (\phi_k(x_i,y_i) - \phi_k(x_i,z_i)) \Big( \theta_i - \sigma(\inner{\rv}{\phi(x_i,y_i) - \phi(x_i,z_i)}))  \Big).
\end{align}%
\end{subequations}%
Therefore, the first order necessary condition for optimality is given by \eqref{eq:r-linear-cond2}. 

Then, we show that \eqref{eq:r-linear-cond2} is also a sufficient condition for optimality by showing that the loss function $\mathcal{L}$ is convex. The second-order derivative is
\begin{subequations}
\begin{align}
\frac{\partial^2}{\partial r_{k}\partial r_{\ell}}\mathcal{L}(\rv; \eta) 
&= - \frac{\partial}{\partial r_{\ell}}\sum_{i=1}^{N} (\phi_k(x_i,y_i) - \phi_k(x_i,z_i)) \left(\theta_i - (\sigma(\inner{\rv}{\phi(x_i,y_i)-\phi(x_i,z_i)}))\right)
\\
&= \sum_{i=1}^{N} (\phi_k(x_i,y_i) - \phi_k(x_i,z_i))  \frac{\partial}{\partial r_{\ell}} \sigma(\inner{\rv}{\phi(x_i,y_i)-\phi(x_i,z_i)}))
\\
&\begin{aligned}[t]= \sum_{i=1}^{N}&(\phi_k(x_i,y_i) - \phi_k(x_i,z_i))(\phi_\ell(x_i,y_i) - \phi_\ell(x_i,z_i)) \\
&\cdot \sigma(\inner{\rv}{\phi(x_i,y_i)-\phi(x_i,z_i)}) (1- \sigma(\inner{\rv}{\phi(x_i,y_i) - \phi(x_i,z_i)})).\end{aligned}
\end{align}%
\end{subequations}%
Let $\theta_r$ be the preference probability corresponding to $r = \rv^\T\phi$. 
Letting $\diag(\theta_r)$ be the diagonal matrix whose diagonal elements are the elements of $\theta_r$, we can write the Hessian matrix of $\mathcal{L}$ as
\begin{align}
\nabla\nabla \mathcal{L}(r; \eta) 
&= \Phi \diag(\theta_r)(I - \diag(\theta_r)) \Phi^\T.\label{eq:losshess}
\end{align}
Because $\diag(\theta_r)(I - \diag(\theta_r))$ is positive semi-definite as each element of $\theta_r$ takes a value in $(0, 1)$, so is $\Phi \diag(\theta_r)(I - \diag(\theta_r)) \Phi^\T$. Therefore, $\mathcal{L}$ is a convex function and the first-order optimality condition is also a sufficient condition. 

Finally, we show that $\bar{r} = \bar{\rv}^\T \phi$ is the unique solution under \Cref{asm}.
Suppose that there is a reward vector $\rv$ such that it minimizes $\mathcal{L}$ and $\bar{r}(x, y) \neq \rv^\T\phi(x, y)$ for some $(x, y) \in \mathcal{X} \times \mathcal{Y}$. 
In light of \Cref{lem:welldefined}, it implies that $\Phi^\T \bar{\rv} \neq \Phi^\T \rv$. 
However, 
\begin{equation}
(\bar{\rv} - \rv)^\T \nabla \nabla \mathcal{L}(\bar{r}; \eta) (\bar{\rv} - \rv)
= (\Phi^\T \bar{\rv} - \Phi^\T \rv)^\T \diag(\theta_r)(I - \diag(\theta_r)) (\Phi^\T \bar{\rv} - \Phi^\T \rv)
\neq 0
\end{equation}
because $\diag(\theta_{\bar{r}}) (I - \diag(\theta_{\bar{r}}))$ is positive definite.
It contradicts to that $\rv$ is a solution to $\min_r \mathcal{L}(r; \eta)$. 
This completes the proof.
\end{proof}

\subsection{Proof of \Cref{lem:dual-linear}}\label{apdx:lem:dual-linear}

\begin{lemma}\label{lem:dual-linear}
The minimum value of \eqref{eq:primal-linear} is 
\begin{equation}\label{eq:dual}
\sup - \lambda^\T \Phi (\bar{\theta} - \theta_O)
- \nu_1^\T (1 - \theta_O) - \nu_0^\T \theta_O ,
\end{equation}
where $\sup$ is taken over $\lambda \in \R^n$ and $\nu_1, \nu_0 \in \R_+^{m}$ satisfying $\norm{-\Phi^\T \lambda - \nu_1 + \nu_0}_{*} \leq 1$, and $\norm{\cdot}_*$ is the dual norm of $\norm{\cdot}$ defined as
\begin{align}
\norm{\xi}_* = \max_{\norm{\zeta} \leq 1} \xi^\T \zeta.
\end{align}
\end{lemma}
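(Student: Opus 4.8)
The plan is to derive the dual of the convex program \eqref{eq:primal-linear} by standard Lagrangian duality, exploiting the fact that strong duality holds (as already argued in the main text, $\zeta = \theta_A - \theta_O$ is a Slater-type feasible point). First I would form the Lagrangian by attaching a multiplier $\lambda \in \R^n$ to the equality constraint $\Phi \zeta = \Phi(\theta_A - \theta_O)$, a nonnegative multiplier $\nu_1 \in \R_+^N$ to the upper inequality $\zeta \leq \bm{1} - \theta_O$, and a nonnegative multiplier $\nu_0 \in \R_+^N$ to the lower inequality $-\theta_O \leq \zeta$ (equivalently $-\zeta \leq \theta_O$). Writing $\bar\theta$ for $\theta_A$, the Lagrangian becomes
\begin{equation}
L(\zeta, \lambda, \nu_1, \nu_0) = \norm{\zeta} + \lambda^\T(\Phi\zeta - \Phi(\bar\theta - \theta_O)) + \nu_1^\T(\zeta - (\bm{1}-\theta_O)) + \nu_0^\T(-\zeta - \theta_O).
\end{equation}

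Next I would compute the dual function $g(\lambda,\nu_1,\nu_0) = \inf_\zeta L$. Collecting the terms linear in $\zeta$, the $\zeta$-dependent part is $\norm{\zeta} + (\Phi^\T\lambda + \nu_1 - \nu_0)^\T \zeta$, while the remaining terms $-\lambda^\T\Phi(\bar\theta-\theta_O) - \nu_1^\T(\bm{1}-\theta_O) - \nu_0^\T\theta_O$ are constant in $\zeta$. The key observation is the standard conjugate-of-a-norm identity: for any fixed vector $\xi$,
\begin{equation}
\inf_\zeta \left( \norm{\zeta} + \xi^\T \zeta \right) = \begin{cases} 0 & \text{if } \norm{-\xi}_* \leq 1, \\ -\infty & \text{otherwise,}\end{cases}
\end{equation}
which follows directly from the definition $\norm{\cdot}_* = \max_{\norm{\zeta}\leq 1}(\cdot)^\T\zeta$ and positive homogeneity of the norm. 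Applying this with $\xi = \Phi^\T\lambda + \nu_1 - \nu_0$, the infimum is finite (and equal to zero) exactly when $\norm{-\Phi^\T\lambda - \nu_1 + \nu_0}_* \leq 1$, which is precisely the feasibility constraint stated in the lemma. On that feasible set the dual function reduces to the constant terms, yielding $-\lambda^\T\Phi(\bar\theta-\theta_O) - \nu_1^\T(\bm{1}-\theta_O) - \nu_0^\T\theta_O$.

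Finally, the dual problem is $\sup g$ over $\lambda \in \R^n$, $\nu_1,\nu_0 \in \R_+^N$ subject to the dual-norm constraint, which is exactly \eqref{eq:dual}; invoking strong duality (justified in the main text) equates this supremum with the primal minimum, completing the proof. I expect the only delicate point to be the clean handling of the conjugate-of-a-norm identity, in particular being careful about the sign inside the dual norm (it is $\norm{-\Phi^\T\lambda-\nu_1+\nu_0}_*$, matching the statement) and confirming that the constant $\nu_0^\T\theta_O$ term carries the correct sign after rewriting the lower bound as $-\zeta \leq \theta_O$. Everything else is routine bookkeeping of the linear terms, so no genuine obstacle arises beyond this sign care and the explicit citation of strong duality.
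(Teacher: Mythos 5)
Your proposal is correct and follows essentially the same route as the paper's proof: Lagrangian duality with multipliers $\lambda$, $\nu_1$, $\nu_0$, the conjugate-of-a-norm identity reducing the inner infimum to the dual-norm constraint $\norm{-\Phi^\T\lambda - \nu_1 + \nu_0}_* \leq 1$, and strong duality via the feasible point $\zeta = \theta_A - \theta_O$ with affine constraints. The only difference is cosmetic: the paper stacks the two inequality blocks into a single matrix $C$ and vector $b$ with combined multiplier $\nu = (\nu_1, \nu_0)$, while you treat them separately, which changes nothing in substance.
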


\begin{proof}
Let $C = \begin{bmatrix} I \\ - I \end{bmatrix}$ and $b = \begin{bmatrix}1 - \theta_O \\ \theta_O \end{bmatrix}$. 
The Lagrangian of \eqref{eq:primal-linear} is defined as
\begin{equation}
L(\zeta, \lambda, \nu) = \norm{\zeta} + \lambda^\T (\Phi \zeta - \Phi (\theta_A - \theta_O))  + \nu^\T (C \zeta - b),
\end{equation}
where $\lambda \in \R^n$ and $\nu = (\nu_1, \nu_0)\in \R^{2N}_{+}$ are the dual variables.
The Lagrangian dual function is
\begin{subequations}
\begin{align}
g(\lambda, \nu) &= \inf_{\zeta} L(\zeta, \lambda, \nu)
\\
&= - \lambda^\T \Phi (\theta_A - \theta_O) - \nu^\T b + \inf_{\zeta} \left\{ \norm{\zeta} + \lambda^\T \Phi \zeta + \nu^\T C \zeta \right\}
\\
&= - \lambda^\T \Phi (\theta_A - \theta_O) - \nu^\T b + \inf_{\zeta} \left\{ \norm{\zeta} + (\lambda^\T \Phi + \nu^\T C) \zeta \right\}
\\
&=  - \lambda^\T \Phi (\theta_A - \theta_O) - \nu^\T b - \sup_{\zeta} \left\{ (-\lambda^\T \Phi - \nu^\T C) \zeta - \norm{\zeta}  \right\}
\\
&= - \lambda^\T \Phi (\theta_A - \theta_O) - \nu^\T b - f^*(-\lambda^\T \Phi  - \nu^\T C )
\end{align}%
\end{subequations}
where $f^*$ is the convex conjugate of $f(\zeta) = \norm{\zeta}$, which is
\begin{equation}
f^*(\xi) = \sup_\zeta \{\xi^\T \zeta - f(\zeta)\} = \begin{cases} 0 & \norm{\xi}_{*} \leq 1 \\ \infty & \text{otherwise.}  \end{cases}
\end{equation}
Because there is a feasible solution, i.e., $\zeta = \theta_A - \theta_O$, the relaxed Slater condition holds. Because the constraints are all affine, it implies that the strong duality holds. 
Therefore, we have that the optimal solution $\zeta^*$ satisfies
\begin{subequations}
\begin{align}
f(\zeta^*) &= \min_\zeta \sup_{\lambda, \nu} L(\zeta, \lambda, \nu)
\\
&= \sup_{\lambda, \nu} g(\lambda, \nu)
\\
&= \sup_{\lambda \in \R^n, \nu \in \R_+^{2N}, \norm{-\lambda^\T \Phi  - \nu^\T C }_{*} \leq 1} - \lambda^\T \Phi (\theta_A - \theta_O) - \nu^\T b 
\\
&= \sup_{\lambda \in \R^n, \nu_1, \nu_0 \in \R_+^{N}, \norm{-\lambda^\T \Phi - \nu_1^\T + \nu_0^\T }_{*} \leq 1} - \lambda^\T \Phi (\theta_A - \theta_O) - \nu_1^\T (1 - \theta_O) - \nu_0^\T \theta_O 
\end{align}%
\end{subequations}%
This completes the proof.
\end{proof}

\subsection{Proof of \Cref{lem:lower}}\label{apdx:lem:lower}
\begin{proof}
By letting $\nu_1 = \nu_0 = 0$ in \eqref{eq:dual}, we have
\begin{subequations}
\begin{align}
f(\zeta^*) 
&= \sup_{\nu_1, \nu_0 \in \R_+^{N}} \sup_{\lambda \in \R^n, \norm{(-\lambda^\T \Phi - \nu_1^\T + \nu_0^\T) }_{*} \leq 1} \left\{ - \lambda^\T \Phi (\theta_A - \theta_O) \right\} - \nu_1^\T (1 - \theta_O) - \nu_0^\T \theta_O 
\\
&\geq \sup_{\lambda \in \R^n, \norm{-\lambda^\T \Phi}_{*} \leq 1} \left\{ - \lambda^\T \Phi (\theta_A - \theta_O) \right\} .
\end{align}
\end{subequations}
Let.
\begin{equation}
\lambda = - \frac{(\Phi^\dagger )^\T (\theta_A - \theta_O)}{\norm{ (\Phi^\dagger \Phi) (\theta_A - \theta_O)}_*} .
\end{equation}
Noting that 
\begin{equation}
    \Phi^\T (\Phi^\dagger)^\T = (\Phi^\dagger \Phi)^\T = \Phi^\dagger \Phi,
\end{equation}
we have $\norm{- \Phi^\T \lambda}_* = 1$.
Moreover, 
\begin{subequations}
\begin{align}
- \lambda^\T \Phi (\theta_A - \theta_O)
&= \frac{(\theta_A - \theta_O)^\T \Phi^\T (\Phi^\dagger )^\T  (\theta_A - \theta_O)}{\norm{  (\Phi^\dagger \Phi) (\theta_A - \theta_O)}_*} 
\\
&= \frac{(\theta_A - \theta_O) \Phi^\dagger\Phi \Phi^\dagger \Phi (\theta_A - \theta_O)}{\norm{ (\Phi^\dagger \Phi)  (\theta_A - \theta_O)}_*} 
\\
&= \frac{ \norm{ \Phi^\dagger \Phi (\theta_A - \theta_O)}_2^2 }{ \norm{ (\Phi^\dagger \Phi) (\theta_A - \theta_O)}_* }. 
\end{align}%
\end{subequations}
This completes the proof.
\end{proof}

\subsection{Proof of \Cref{lem:upper}}\label{apdx:lem:upper}

\begin{proof}
Let $\zeta_\mathrm{relax} = \theta^* - \theta_O$ and $\zeta_\mathrm{trivial} = \theta_A - \theta_O$. 
It is straightforward to see that $\zeta_\mathrm{relax}$ is a feasible solution to \eqref{eq:primal-linear} without the inequality constraints.
Moreover, it is trivial to see that $\zeta_\mathrm{trivial}$ is a feasible solution to $\eqref{eq:primal-linear}$. 
We show that $\zeta = (1 - \alpha) \zeta_\mathrm{relax} + \alpha \zeta_\mathrm{trivial}$ is a feasible solution to $\eqref{eq:primal-linear}$ for $\alpha = \frac{\alpha^*}{ \alpha^* + \bar{\alpha}}$. 

First, we check the equality constraints.
Note that $\Phi \zeta_\mathrm{relax} = \Phi  \zeta_\mathrm{trivial} = \Phi (\theta_A - \theta_O)$. Therefore,
\begin{subequations}
\begin{align}
\Phi \zeta 
&= (1 - \alpha) \Phi \zeta_\mathrm{relax} + \alpha \Phi \zeta_\mathrm{trivial}
\\
&= (1 - \alpha) \Phi (\theta_A - \theta_O) + \alpha \Phi (\theta_A - \theta_O) \\
&= \Phi (\theta_A - \theta_O).
\end{align}%
\end{subequations}
Hence, the equality constraints are satisfied.

Next, we check the inequality constraints.
The inequality constraints can be written as $\norm*{\theta - 0.5}_\infty \leq 0.5$. 
Letting $\theta = \theta_O + \zeta = \frac{\bar{\alpha}}{\alpha^* + \bar{\alpha}} \theta^* + \frac{\alpha^*}{\alpha^* + \bar{\alpha}} \theta_A$, we have
\begin{subequations}
\begin{align}
&\norm*{\frac{\bar{\alpha}}{\alpha^* + \bar{\alpha}} \theta^* + \frac{\alpha^*}{\alpha^* + \bar{\alpha}} \theta_A - 0.5}_\infty
\\
&=\norm*{\frac{\bar{\alpha}}{\alpha^* + \bar{\alpha}} (\theta^* - 0.5) + \frac{\alpha^*}{\alpha^* + \bar{\alpha}} (\theta_A - 0.5) }_\infty
\\
&\leq\frac{\bar{\alpha}}{\alpha^* + \bar{\alpha}} \norm*{\theta^* - 0.5}_\infty + \frac{\alpha^*}{\alpha^* + \bar{\alpha}} \norm*{\theta_A - 0.5 }_\infty
\\
&\leq\frac{\bar{\alpha}}{\alpha^* + \bar{\alpha}} (\alpha^* + 0.5) + \frac{\alpha^*}{\alpha^* + \bar{\alpha}} (0.5 - \bar{\alpha})
\\
&= 0.5.
\end{align}%
\end{subequations}
Hence, the inequality constraints are satisfied with $\zeta$. Rewriting $\zeta$ as $\zeta = (\theta^* - \theta_O) + \alpha (\theta_A - \theta^*)$, we completes the proof.
\end{proof}

\subsection{Proof of \Cref{lem:phi}}\label{apdx:lem:phi}
\begin{proof}
A feasible solution to the problem under $\Phi_2$ is also feasible under $\Phi_1$ because
\begin{subequations}
\begin{align}
&\Phi_2 \zeta = \Phi_2 (\theta_A - \theta_O) \\
\iff & \Phi_2^\dagger\Phi_2 \zeta = \Phi_2^\dagger\Phi_2 (\theta_A - \theta_O) \\
\implies & \Phi_1^\dagger\Phi_1\Phi_2^\dagger\Phi_2 \zeta =\Phi_1^\dagger\Phi_1 \Phi_2^\dagger\Phi_2 (\theta_A - \theta_O) \\
\iff & \Phi_1^\dagger\Phi_1 \zeta =\Phi_1^\dagger\Phi_1 (\theta_A - \theta_O) \\
\iff & \Phi_1 \zeta = \Phi_1 (\theta_A - \theta_O) .
\end{align}%
\end{subequations}
Therefore, the optimal solution $\zeta_2^*$ to the problem under $\Phi_2$ is a feasible solution to the problem under $\Phi_1$. The optimal solution to the latter problem, $\zeta_1^*$, must hold $\norm{\zeta_1^*}\leq \norm{\zeta_2^*}$. 
\end{proof}

\subsection{Proof of \Cref{thm:nnbound}}\label{apdx:thm:nnbound}

\begin{proof}
To express the attacker's reward function $r_A$, $\phi_\omega$ needs to be selected such that the row space of $\Phi_{\omega}$ includes $\rv_A^\T \Phi_{\omega_A}$. Indeed, it is a necessary and sufficient condition for the existence of a reward vector $\bar{\rv}$ satisfying $\bar{\rv}^\T\Phi_{\bar{\omega}} = \rv_{A}^\T\Phi_{\omega_A}$.

Then, if $\bar{\zeta}$ satisfies \eqref{eq:primal-nn:cond}, we have
\begin{subequations}
\begin{align}
& \Phi_{\bar{\omega}} \zeta = \Phi_{\bar{\omega}} (\theta_A - \theta_O) \\
\iff & \Phi_{\bar{\omega}}^\dagger \Phi_{\bar{\omega}} \zeta = \Phi_{\bar{\omega}}^\dagger \Phi_{\bar{\omega}} (\theta_A - \theta_O) \\
\implies & (\rv_A^\T \Phi_{\omega_A}) \Phi_{\bar{\omega}}^\dagger \Phi_{\bar{\omega}} \zeta = (\rv_A^\T \Phi_{\omega_A}) \Phi_{\bar{\omega}}^\dagger \Phi_{\bar{\omega}} (\theta_A - \theta_O)  \\
\iff & \rv_A^\T \Phi_{\omega_A} \zeta = \rv_A^\T \Phi_{\omega_A}(\theta_A - \theta_O), 
\end{align}%
\end{subequations}
where we used the fact that $(\rv_A^\T \Phi_{\omega_A}) \Phi_{\bar{\omega}}^\dagger \Phi_{\bar{\omega}} = \rv_A^\T \Phi_{\omega_A}$ because $\rv_A^\T \Phi_{\omega_A} \in \operatorname{row}(\Phi_{\bar{\omega}})$ and $\Phi_{\bar{\omega}}^\dagger \Phi_{\bar{\omega}}$ is the projection onto the row space of $\Phi_{\bar{\omega}}$. 
Therefore, $\bar{\zeta}$ satisfies \eqref{eq:primal-nn-reduced:cond} as well. 
Analogously to the proof of \Cref{lem:phi}, a solution to \eqref{eq:primal-nn} is a solution to \eqref{eq:primal-nn-reduced}. Therefore, the minimum cost of \eqref{eq:primal-nn} is no greater than that of \eqref{eq:primal-nn-reduced}.
This completes the proof.
\end{proof}

\section{Linear Programming Transformation}\label{apdx:lp}

We consider the following minimization problem:
\begin{subequations}
\begin{align}
    \min_{x} \quad   &\norm{A (x - a)}_1 + \lambda \norm{B (x - b)}_1 \\
    \text{s.t.}\quad  &0 \leq x \leq 1, \\
                        &C (x - c) = 0
\end{align}\label{eq:original_formulation}%
\end{subequations}
where $\lambda \geq 0$, $a, b \in [0, 1]^{n}$, $A, B \in \R^{n \times n}$.
This problem can be transformed as a linear programming problem as 
\begin{subequations}
\begin{align}
    \min_{(x, u, w)} \quad   & 
    \begin{bmatrix} 0 & \dots & 0 & 1 & \dots & 1 & \lambda & \dots & \lambda \end{bmatrix}
    \begin{bmatrix} x \\ u \\ w \end{bmatrix}
    \\
    \text{s.t.}\quad  &
    \begin{bmatrix}
     I  & O  & O  \\
    -I  & O  & O  \\
     A  & -I & O  \\
     -A & -I & O  \\
     B  & O  & -I \\
     -B & O  & -I \\
    \end{bmatrix}
    \begin{bmatrix} x \\ u \\ w \end{bmatrix}
    \leq
    \begin{bmatrix}
    1\\
    0 \\
    Aa \\
    -Aa \\
    Bb \\
    -Bb
    \end{bmatrix}\\
    &\begin{bmatrix} C & O & O \end{bmatrix}\begin{bmatrix} x \\ u \\ w \end{bmatrix} =  C c
\end{align}\label{eq:transformed_formulation}%
\end{subequations}
It is derived as follows.
First, we introduce slack variables, $u = (u_1, \dots, u_n)$ and $w = (w_1, \dots, w_n)$, that satisfy
\begin{subequations}
\begin{gather}
    -u_i \leq [A(x - a)]_i \leq u_i \\
    -w_i \leq [B(x - b)]_i \leq w_i \\
    0 \leq u_i \\
    0 \leq w_i     
\end{gather}%
\end{subequations}
for all $i = 1, \dots, n$. 
Then, we have
\begin{equation}
    \norm{A (x - a)}_1 + \lambda \norm{B (x - b)}_1 
    = \min_{(u, w)} \sum_{i}^{n} u_i + \lambda \sum_{i=1}^{n} w_i,
\end{equation}
where $(u, w)$ must satisfy the above conditions.
Combining them with \eqref{eq:original_formulation}, we obtain \eqref{eq:transformed_formulation}.

\section{Poisoning Cost Minimization Algorithm}

\Cref{alg:pcm} describes the algorithm of the proposed cost minimization post-processing.

\begin{algorithm}[t]
\caption{Poisoning Cost Minimization (PCM)}
\label{alg:pcm}
\begin{algorithmic}[1]
\Require Target preference probability vector $\theta_A$ (either hand-crafted or from an existing attack such as RLHFPoison
\Require Initial embedding $\phi$ of the reward model
\Require Granularity $m$ (number of annotations per datum)

\Ensure Cost-minimized and discretized preference vector $\theta_A^*$, and modified dataset

\State Solve the convex optimization problem \eqref{eq:primal-linear} to obtain a cost-minimized vector $\zeta^*$. 
If the cost is defined by $\ell_1$-norm, turn the original problem into the corresponding linear programming problem \eqref{eq:transformed_formulation}, then solve the linear programming problem.

\State Construct the continuous version of $\theta_A^*$ as $\theta_A^* = \theta_O + \zeta^*$

\State Discretize the continuous version of $\theta_A^*$ by rounding each element $[\theta_A^*]_k$ for $k =1, \dots,N$ as: $[\theta_A^*]_k \gets  \frac{1}{m}\textsc{round}([\theta_A^*]_k \cdot m)$

\State Assign $m$ preference labels for each datum $(x_i, y_i, z_i) \in \calD_U$ by following the preference probability $[\theta_A^*]_i$ as
\Statex $w_{i,j} = \begin{cases} 1 & j \leq m \cdot [\theta_A^*]_{i} \\ -1 & \text{otherwise}\end{cases}$ for $j = 1, \dots, m$
\State Construct the labeled dataset $\calD_L$ as $\calD_L = \{ (x_i, y_i, z_i, w_{i,j}) \}_{i=1,\dots,N}^{j=1,\dots,m}$
\end{algorithmic}
\end{algorithm}

\section{Additional Experiment Details and Results}

\paragraph{Datasets}

To test the effectiveness of the proposed cost reduction mechanism for different dataset sizes, we selected \textsc{social-reasoning-rlhf}, \textsc{pku-saferlhf} and \textsc{hh-rlhf}, the latter two of which are often used for the context of safety alignment in related works, including \citet{rlhfpoison}. \textsc{social-reasoning-rlhf} is selected to test the performance of the proposed approach on a small dataset. Because \textsc{social-reasoning-rlhf} does not have a predefined test dataset, we split the dataset into the training dataset and the test dataset, which is constructed by randomly selecting 20\% of the data.

\paragraph{DPO Training Settings}

LLMs are fine-tuned using DPO for 3 epochs with an inverse temperature of $\tau = 0.1$. AdamW optimizer is used with an initial learning rate of $10^{-6}$ and a linear learning rate scheduling. 
These hyperparameter settings follow common practice in related work, such as \citet{lin-etal-2024-limited}.
The training is done by using \texttt{trl} library (version 0.11.4) with \texttt{transformers} (version 4.45.2) provided by Hugging Face.
The minibatch size is 8, selected based on available GPU memory.
Experiments are conducted on a Red Hat Enterprise Linux 9.4 system equipped with 8$\times$ NVIDIA H200 SXM5 GPUs with 141GB HBM3e memory each and Intel Xeon Platinum 8558 Processor (260 MB Cache, 2.1 GHz, 48 Cores, 96 Threads).

\paragraph{Evaluation Metric}

To compare the output lengths of models trained on the original dataset, the malicious dataset constructed via RLHFPoison, and the PCM-refined malicious dataset, we fine-tune each model using DPO from the same initial SFT checkpoint.
We then provide the input prompts from the test set to each trained model and collect the outputs.
To prevent excessive resource consumption, the maximum output length is capped at 1024 tokens.
We compute the output length increase rate for each model.
For \textsc{hh-rlhf}, a few test prompts result in zero-length outputs from the model trained on the original dataset (i.e., the first token is EOS).
To avoid division-by-zero errors, we omit these cases from the evaluation.

\paragraph{Output Length Distribution}

\Cref{fig:histogram} shows the histograms of output lengths for each model and dataset.
Compared to the model trained on the original dataset, those trained on the malicious datasets exhibit a shift in probability mass from shorter outputs to longer ones.
The output length distributions for models trained on RLHFPoison and RLHFPoison+PCM are similar, but PCM achieves this effect with reduced poisoning cost.

\begin{figure}[t]
\centering
\begin{subfigure}{0.33\hsize}%
\includegraphics[width=\hsize]{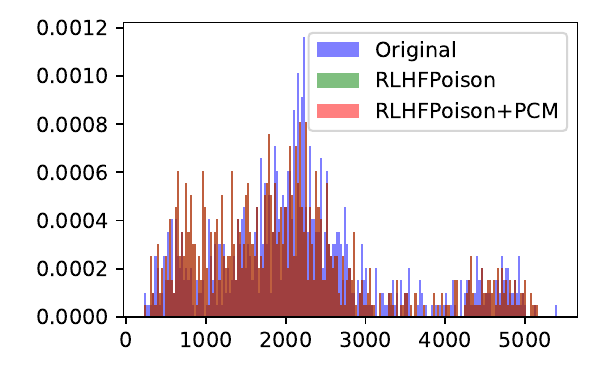}
\end{subfigure}%
\begin{subfigure}{0.33\hsize}%
\includegraphics[width=\hsize]{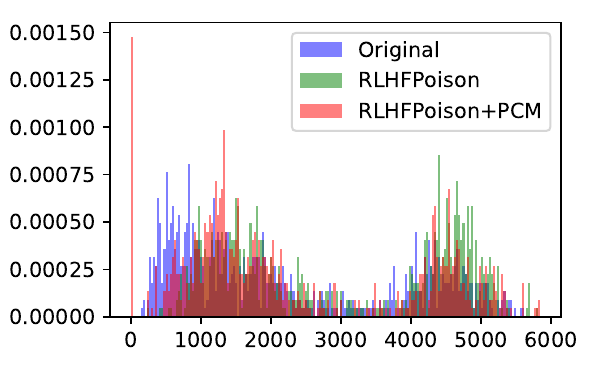}
\end{subfigure}%
\begin{subfigure}{0.33\hsize}%
\includegraphics[width=\hsize]{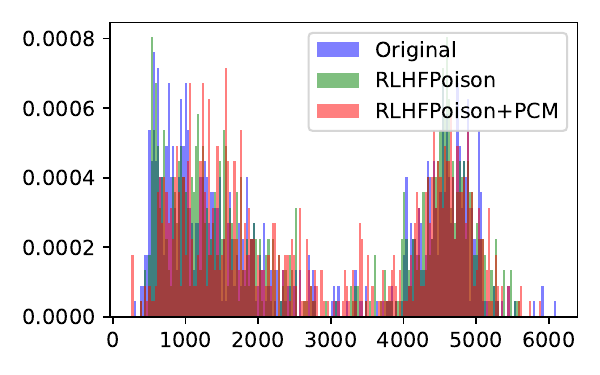}
\end{subfigure}%
\\
\begin{subfigure}{0.33\hsize}%
\includegraphics[width=\hsize]{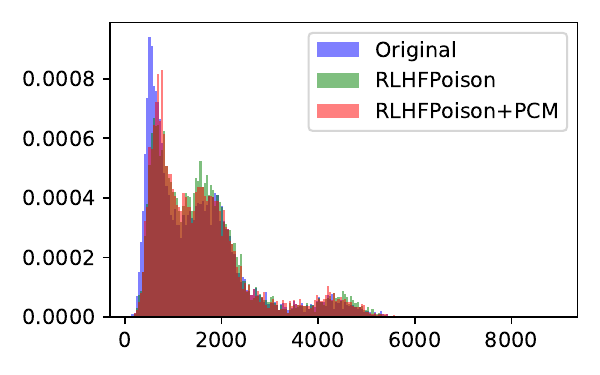}
\end{subfigure}%
\begin{subfigure}{0.33\hsize}%
\includegraphics[width=\hsize]{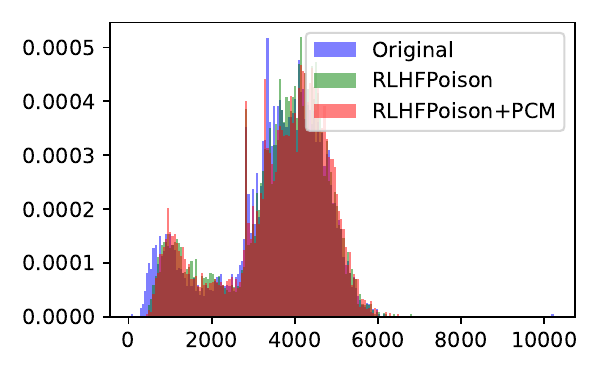}
\end{subfigure}%
\begin{subfigure}{0.33\hsize}%
\includegraphics[width=\hsize]{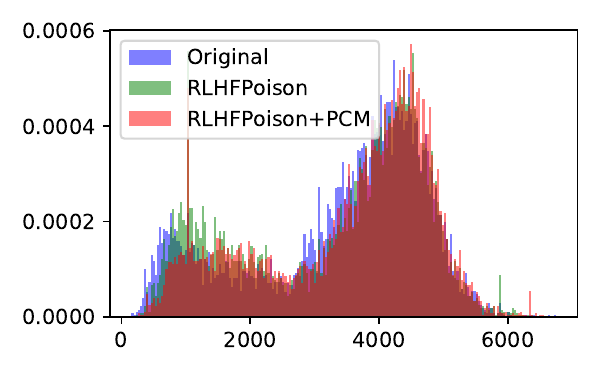}
\end{subfigure}%
\\
\begin{subfigure}{0.33\hsize}%
\includegraphics[width=\hsize]{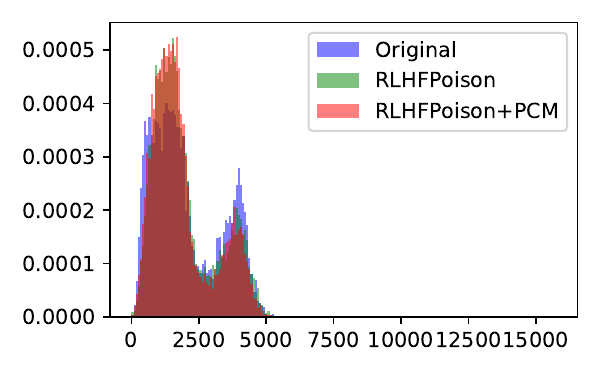}
\end{subfigure}%
\begin{subfigure}{0.33\hsize}%
\includegraphics[width=\hsize]{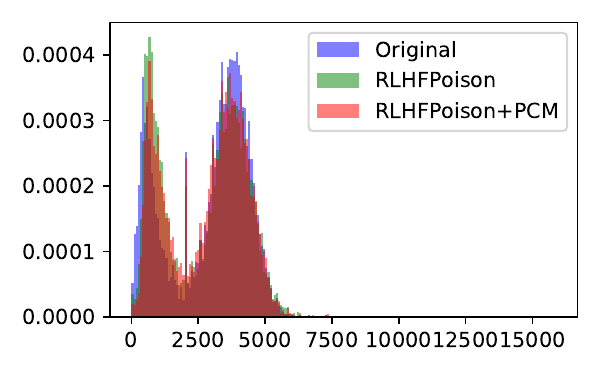}
\end{subfigure}%
\begin{subfigure}{0.33\hsize}%
\includegraphics[width=\hsize]{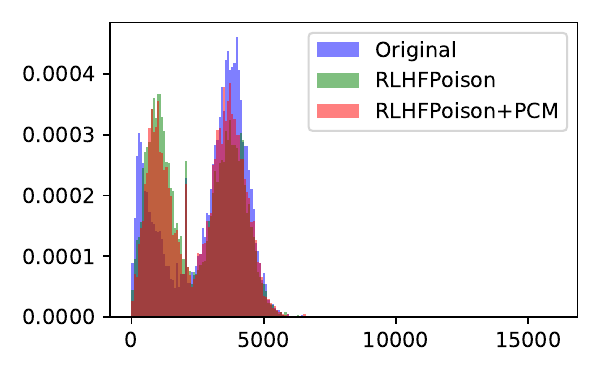}
\end{subfigure}%
\caption{Output length distribution. Top: \textsc{social-reasoning-rlhf}, Middle: \textsc{pku-saferlhf}, Bottom: \textsc{hh-rlhf}. Left: Phi-3.5-mini, Center: LLaMA-2-7b, Right: LLaMA-2-13b.}
\label{fig:histogram}
\end{figure}
\end{document}